\newtheorem{theorem}{Theorem}
\theoremstyle{definition}
\newtheorem{definition}{Definition}
\newtheorem{remark}{Remark}
\newcommand{\R}{\mathbb{R}}
\newcommand{\E}{\mathbb{E}}
\renewcommand\footnotetextcopyrightpermission[1]{}
\title{The Bayesian Geometry of Transformer Attention}
\author{Naman Agarwal}
\affiliation{
  \institution{Dream Sports}
  \city{New York}
  \state{NY}
  \country{USA}
}
\email{naman33k@gmail.com}
\author{Siddhartha R. Dalal}
\affiliation{
  \institution{Columbia University}
  \department{School of Professional Studies and Department of Statistics}
  \city{New York}
  \state{NY}
  \country{USA}
}
\email{sd2803@columbia.edu}
\author{Vishal Misra}
\affiliation{
  \institution{Columbia University}
  \department{Department of Computer Science}
  \city{New York}
  \state{NY}
  \country{USA}
}
\email{vishal.misra@columbia.edu}
\begin{document}

\begin{abstract}
Modern sequence models often appear to behave as Bayesian learners, but it remains unclear whether this reflects genuine probabilistic inference or task-specific heuristics. We introduce \emph{Bayesian wind tunnels}---controlled environments where the true posterior is known in closed form and memorization is provably impossible---to resolve this question empirically. In these settings, small transformers reproduce exact Bayesian posteriors for filtering and hypothesis elimination with \mbox{$10^{-3}$--$10^{-4}$} bit accuracy, while capacity-matched MLPs fail by orders of magnitude.

To understand which architectural ingredients enable exact inference, we decompose Bayesian computation into three \emph{inference primitives}: (i)~\textbf{belief accumulation}---integrating evidence into a running posterior; (ii)~\textbf{belief transport}---propagating beliefs forward through stochastic dynamics; and (iii)~\textbf{random-access binding}---retrieving stored hypotheses by content rather than position. Different tasks demand different subsets of these primitives, and different architectures can realize different subsets.

Comparing Transformers, Mamba, LSTMs, and MLPs across bijection learning, HMM filtering, and associative recall, we find that Transformers realize all three primitives; Mamba realizes accumulation and transport but struggles with random-access binding; LSTMs realize only accumulation (of static sufficient statistics); and MLPs realize none. Geometric diagnostics reveal orthogonal key bases, low-dimensional value manifolds parameterized by posterior entropy, and---in Mamba---five discrete clusters corresponding to HMM hidden states.

These results demonstrate that Bayesian computation is not monolithic: its realizability depends on the inference primitives a task demands and the architectural mechanisms available to implement them. Bayesian wind tunnels provide a foundation for mechanistically connecting small, verifiable systems to reasoning phenomena observed in large language models.
\end{abstract}

\maketitle

\section{Introduction}
\label{sec:intro}

Can transformers perform exact Bayesian inference---filtering and hypothesis elimination---or do they merely approximate it through pattern matching? Natural language offers no ground-truth posterior against which to verify predictions, and modern LLMs are too large and too entangled with their data to separate genuine probabilistic computation from memorization. Even when models \emph{behave} Bayesianly, there is no direct way to confirm that the internal computation matches Bayes' rule.

\vspace{4pt}
\noindent\textbf{Our approach.}
We replace unverifiable natural data with \emph{Bayesian wind tunnels}: controlled prediction tasks where
\begin{enumerate}[itemsep=2pt, topsep=2pt]
    \item the \emph{analytic posterior} is known exactly at each step,
    \item the \emph{hypothesis space} is so large that memorization is computationally infeasible,
    \item in-context prediction requires \emph{genuine probabilistic inference}.
\end{enumerate}
This converts a qualitative question (``does it do Bayes?'') into a quantitative test: does the model's predictive entropy match the analytic posterior entropy position by position?

\vspace{4pt}
\noindent\textbf{Four wind tunnels.}
We study four settings:
\begin{itemize}[itemsep=2pt, topsep=2pt]
    \item \textbf{Bijection learning:} a discrete hypothesis-elimination problem with a closed-form posterior.
    \item \textbf{Hidden Markov Models (HMMs):} a sequential, stochastic inference problem requiring recursive updates.
    \item \textbf{Bayesian regression:} a continuous inference problem with closed-form Gaussian posterior over linear weights.
    \item \textbf{Associative recall:} a content-based retrieval task testing the binding primitive.
\end{itemize}

To understand which architectural ingredients enable Bayesian inference, we decompose it into three \emph{inference primitives}:
\begin{enumerate}[itemsep=2pt]
    \item \textbf{Belief accumulation:} integrating evidence into a running posterior (e.g., updating $P(\theta \mid x_{1:t})$ as observations arrive).
    \item \textbf{Belief transport:} propagating beliefs forward through stochastic dynamics (e.g., HMM filtering where hidden states evolve).
    \item \textbf{Random-access binding:} retrieving stored hypotheses by content rather than position (e.g., recalling a target given a probe cue).
\end{enumerate}
Different tasks demand different subsets of these primitives, and different architectures can realize different subsets. We test four architectures: Transformers, Mamba (a selective state-space model), LSTMs, and MLPs. \Cref{fig:primitives} summarizes which architectures realize which primitives.

\begin{figure}[t]
    \centering
    \includegraphics[width=0.85\columnwidth]{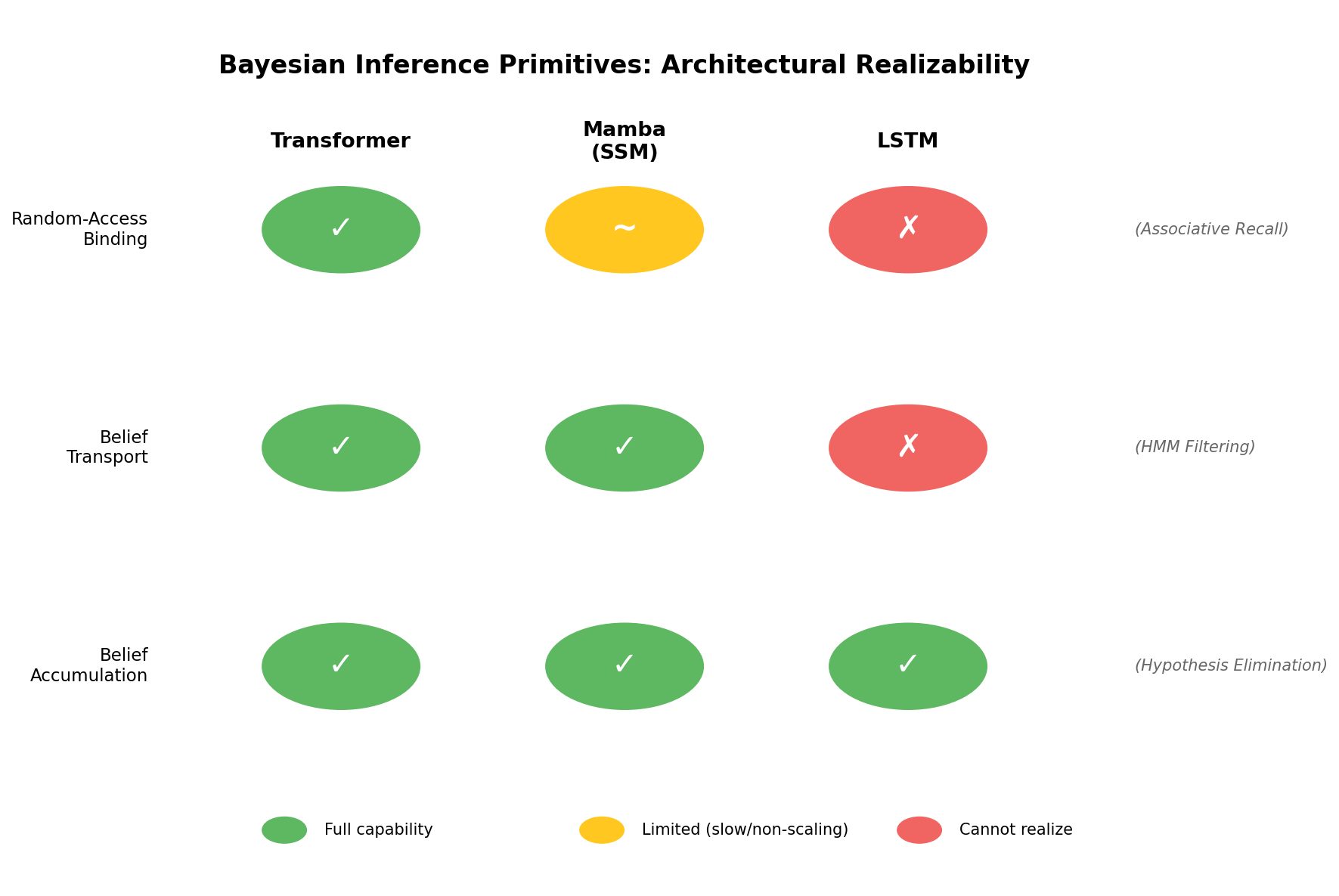}
    \caption{Inference primitives taxonomy. Each row is a primitive; each column is an architecture. Transformers realize all three primitives. Mamba realizes accumulation and transport but struggles with random-access binding. LSTMs realize only accumulation of static sufficient statistics. MLPs realize none.}
    \label{fig:primitives}
\end{figure}

\vspace{4pt}
\noindent\textbf{Findings.}
Transformers realize all three primitives and succeed on all tasks. Mamba realizes accumulation and transport---achieving state-of-the-art on HMM filtering---but struggles with random-access binding in associative recall. LSTMs realize only accumulation of \emph{static} sufficient statistics: they succeed on belief revision (where the statistic is fixed-dimensional) but fail when the sufficient statistic itself evolves under dynamics or must be indexed by content. MLPs realize none and fail uniformly.

Geometric diagnostics reveal orthogonal key axes, progressive alignment, and a low-dimensional value manifold parameterized by entropy. On HMM tracking, Mamba's final layer organizes into five clusters---one per hidden state---showing that the model discovers the corner geometry of the belief simplex.

\vspace{4pt}
\noindent\textbf{Contribution.}
This paper provides the first empirical proof that neural sequence models can realize exact Bayesian posteriors, introduces Bayesian wind tunnels as a tool for probing algorithmic reasoning in verifiable settings, and establishes a taxonomy of inference primitives that explains which architectures succeed on which tasks. The primitives framework unifies prior observations: content-based routing enables accumulation and transport; attention's random-access capability additionally enables binding.

\vspace{4pt}
\noindent\textbf{Clarification on ``Bayesian inference.''}
We do not claim a Bayesian posterior over network weights; we show that the learned predictor implements the \emph{Bayesian posterior predictive over latent task variables}---the filtering posterior over hidden states in HMMs, or the elimination posterior over bijections. This is a statement about the input-output function the transformer computes, not about weight-space uncertainty. Specifically, we demonstrate exact Bayesian inference in tasks whose posteriors factorize sequentially over the input---filtering and elimination, not general Bayesian model selection.

\vspace{4pt}
\noindent\textbf{Roadmap: this paper as Lemma 1.}
This paper establishes the \emph{existence and internal geometry} of exact Bayesian inference in transformers under verifiable conditions. It is the first of three papers forming a unified argument: Paper II shows that this geometry arises generically from gradient dynamics under cross-entropy training, explaining \emph{why} transformers learn Bayesian structure. Paper III shows how these primitives compose in partially observed settings closer to natural language. Together, the trilogy characterizes when, why, and how neural sequence models implement probabilistic reasoning.

\vspace{6pt}
\begin{mdframed}[linewidth=0.5pt, innertopmargin=8pt, innerbottommargin=8pt]
\noindent\textbf{Structural Theorem (Inference Primitives and Architectural Realizability).}
\emph{Bayesian inference in sequential prediction decomposes into three primitives:}
\begin{enumerate}[itemsep=1pt, topsep=2pt, leftmargin=1.5em]
    \item \emph{belief accumulation over fixed latent hypotheses,}
    \item \emph{belief transport under latent state dynamics, and}
    \item \emph{random-access binding between beliefs and past observations.}
\end{enumerate}
\emph{Neural sequence architectures differ not in whether they can approximate Bayesian inference, but in which primitives they can realize:}
\begin{itemize}[itemsep=1pt, topsep=2pt, leftmargin=1.5em]
    \item \emph{Recurrent architectures (LSTMs) implement accumulation of static sufficient statistics, but fail when inference requires transport of probability mass under nontrivial dynamics.}
    \item \emph{State-space models (Mamba) implement accumulation and transport, but fail when inference requires random-access binding to arbitrary past observations.}
    \item \emph{Attention-based transformers implement all three primitives by externalizing belief as a geometric, addressable representation rather than compressing it into fixed-size state.}
\end{itemize}
\emph{The dominance of transformers in reasoning tasks arises not from scale alone, but from primitive completeness: they are the minimal architecture realizing the full set of inference primitives.}
\end{mdframed}
\vspace{2pt}

\section{Theoretical Framework: Cross-Entropy and Bayesian Inference}
\label{sec:theory}

Cross-entropy training on contextual prediction tasks has a well-known population optimum: the Bayesian posterior predictive distribution. This section formalizes that connection. The theory establishes \emph{what} the learned function should be in the infinite-data, infinite-capacity limit; the empirical sections evaluate \emph{which architectures can approximate it} in finite settings.

\subsection{Setup}

Consider a family of tasks indexed by a latent parameter $\theta \sim \pi(\theta)$.
For each task:
\begin{itemize}[leftmargin=*, itemsep=2pt]
    \item inputs $x$ are drawn from some distribution (possibly adversarial or chosen by the experimenter),
    \item labels are drawn according to $y \sim p(y \mid x, \theta)$,
    \item the model observes a context $c = \{(x_i, y_i)\}_{i=1}^{k}$ and must predict $y$ for a new query input.
\end{itemize}

We train a model $q(y \mid x, c)$ by minimizing population cross-entropy:
\begin{equation}
\label{eq:ce}
\mathcal{L}(q)
~=~ \E_{\theta \sim \pi} \E_{c,(x,y)\sim p(\cdot|\theta)}
\left[-\log q(y \mid x, c) \right].
\end{equation}

\subsection{Cross-entropy minimizes to the Bayesian posterior predictive}

\begin{theorem}[Population optimum of cross-entropy]
\label{thm:bayes_ce}
The minimizer of \eqref{eq:ce} is the Bayesian posterior predictive distribution
\begin{equation}
q^\star(y \mid x, c)
~=~
\int p(y \mid x, \theta)\,
p(\theta \mid c)\, d\theta,
\end{equation}
where
\begin{equation}
p(\theta \mid c)
~\propto~
\pi(\theta) \prod_{(x_i,y_i)\in c} p(y_i \mid x_i, \theta).
\end{equation}
\end{theorem}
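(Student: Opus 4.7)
The plan is to reduce the minimization to a pointwise Gibbs inequality at each $(x,c)$ and identify the resulting optimal conditional with the posterior predictive. The key reorganization is to interchange the $\theta$-integration with the $(c,x,y)$ expectation so that $q$ only ever interacts with the $\theta$-marginal law, after which the optimization over $q$ is trivial.

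Concretely, I would first apply Fubini (justified under standard integrability of $\log q$ on the support of the joint law) to rewrite \eqref{eq:ce} as $\mathcal{L}(q) = \E_{(c,x,y)\sim m}[-\log q(y\mid x,c)]$, where $m(c,x,y) = \int \pi(\theta)\,p(c,x,y\mid\theta)\,d\theta$ is the $\theta$-marginalized joint. The tower property then gives $\mathcal{L}(q) = \E_{(x,c)}\!\left[H\!\left(m(\cdot\mid x,c),\,q(\cdot\mid x,c)\right)\right]$, an integral of pointwise cross-entropies with a $\theta$-free weight. Because $q$ is unrestricted as a conditional density, Gibbs' inequality produces the pointwise minimizer $q^\star(y\mid x,c) = m(y\mid x,c)$, uniquely $m$-almost everywhere. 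To finish, I would expand $m(y\mid x,c) = \int p(y\mid x,\theta,c)\,p(\theta\mid x,c)\,d\theta$ and simplify using the model's conditional independence structure: $p(y\mid x,\theta,c) = p(y\mid x,\theta)$ since the query label is determined by $(x,\theta)$ alone, and $p(\theta\mid x,c) = p(\theta\mid c)$ since the query input is drawn independently of $\theta$. A final application of Bayes' rule $p(\theta\mid c) \propto \pi(\theta)\prod_{(x_i,y_i)\in c} p(y_i\mid x_i,\theta)$ yields the claimed form.

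The one genuinely delicate point is the justification of $p(\theta\mid x,c) = p(\theta\mid c)$, i.e.\ that the fresh query input carries no extra information about $\theta$ beyond what $c$ already supplies. The setup supports this — the query distribution is described as ``adversarial or chosen by the experimenter'' and so cannot depend on $\theta$ — but I would promote the independence $x \perp \theta \mid c$ to an explicit hypothesis, since without it the predictive on the right-hand side would need to read $\int p(y\mid x,\theta)\,p(\theta\mid c,x)\,d\theta$ instead. The Fubini swap and Gibbs step are textbook, and strict convexity of $-\log$ gives uniqueness of $q^\star$ up to $m$-null sets.
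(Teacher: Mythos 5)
Your proof is correct and takes essentially the same route as the paper's one-line argument: reduce to a pointwise Gibbs inequality at each fixed $(x,c)$, giving $q^\star(\cdot\mid x,c) = p(\cdot\mid x,c)$, and then unpack that conditional into the posterior-predictive integral via conditional independences and Bayes' rule; your Fubini/tower preamble is just a more explicit setup of the same conditioning. The one thing you add is a genuine catch worth keeping: you correctly isolate $p(\theta\mid x,c)=p(\theta\mid c)$, i.e.\ $x\perp\theta\mid c$, as a separate hypothesis beyond $(y\perp c)\mid(x,\theta)$ --- the paper invokes only the latter factorization explicitly, yet both are needed to land on $\int p(y\mid x,\theta)\,p(\theta\mid c)\,d\theta$ rather than $\int p(y\mid x,\theta)\,p(\theta\mid c,x)\,d\theta$, and this independence is an assumption about how query inputs are sampled, not a consequence of the likelihood model.
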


\begin{proof}
Fixing $(x,c)$ and taking expectation over $y\sim p(\cdot\mid x,c)$,
\[
\arg\min_q \E[-\log q(y\mid x,c)]
~=~
p(y\mid x,c),
\]
which equals $\int p(y\mid x,\theta)p(\theta\mid c)d\theta$ by Bayes' rule and the factorization $(y\!\perp\! c)\mid(x,\theta)$.
\end{proof}

\begin{remark}
This result is \emph{architecture-agnostic}: it defines the Bayes-optimal function but not whether any particular architecture can represent or learn it. Our experiments address this realizability question directly.
\end{remark}

\subsection{Application to the bijection wind tunnel}
\label{sec:theory-bijection}

In the bijection task, each $\theta$ is a bijection $\pi:\{1,\dots,V\}\to\{1,\dots,V\}$.  
A training sequence reveals $k-1$ input--output pairs. Let $\mathcal{O}_{k-1}$ be the set of outputs already observed. Because each input appears at most once per sequence, the current query $x_k$ has never been seen before, so Bayes' rule reduces to:
\begin{equation}
p(\pi(x_k)=y \mid c)
=
\begin{cases}
\dfrac{1}{V-k+1}, & y\notin \mathcal{O}_{k-1},\\
0, &\text{otherwise}.
\end{cases}
\end{equation}

Hence the analytic posterior entropy is
\begin{equation}
H_{\text{Bayes}}(k)=\log_2(V-k+1),
\label{eq:bayes-bijection-entropy}
\end{equation}
producing a monotone staircase that shrinks by one bit whenever a mapping is revealed.

This closed-form posterior allows direct, position-by-position comparison between model entropy and Bayesian entropy; memorization is computationally infeasible because the hypothesis space size $V!$ is enormous.

\subsection{Application to the HMM wind tunnel}
\label{sec:theory-hmm}

In the HMM task, each $\theta$ consists of:
\begin{itemize}[leftmargin=*, itemsep=2pt]
    \item a transition matrix $T \in \R^{S\times S}$,
    \item an emission matrix $E \in \R^{S\times V}$,
    \item an initial state distribution $\pi_0$.
\end{itemize}
After observing $o_{1:t}$, the true Bayesian posterior over hidden states is given by the forward algorithm:
\begin{equation}
\alpha_t(s)
~=~
p(s_t=s \mid o_{1:t})
~=~
\frac{
E(o_t \mid s)
\sum_{s'} T(s \mid s') \alpha_{t-1}(s')
}{
\sum_{s''} E(o_t \mid s'') \sum_{s'} T(s'' \mid s') \alpha_{t-1}(s')
}.
\label{eq:hmm-forward}
\end{equation}

Since models predict the next observation $o_{t+1}$ rather than hidden states, we evaluate the \emph{predictive entropy}:
\begin{equation}
H_{\text{Bayes}}(t)
~=~
-\sum_{o} p(o_{t+1} \mid o_{1:t}) \log_2 p(o_{t+1} \mid o_{1:t}),
\label{eq:hmm-entropy}
\end{equation}
where $p(o_{t+1} \mid o_{1:t}) = \sum_{s,s'} \alpha_t(s) T(s' \mid s) E(o_{t+1} \mid s')$.

Because every training sequence is generated from a freshly sampled $(T,E)$, the hypothesis space is massive and memorization is computationally infeasible. The model must learn to (i) parse the header encoding $T$ and $E$, and (ii) implement a recursive Bayesian update.

\subsection{Application to the regression wind tunnel}
\label{sec:theory-regression}

In the regression task, each $\theta$ is a weight vector $w \in \mathbb{R}^d$ with prior $w \sim \mathcal{N}(0, I)$. Given context observations $(x_i, y_i)$ where $y_i = w^\top x_i + \epsilon_i$ with $\epsilon_i \sim \mathcal{N}(0, \sigma^2)$, the posterior over $w$ is Gaussian with closed-form mean and covariance. The posterior predictive at any query point $x$ is also Gaussian, enabling exact computation of the Bayesian predictive distribution for comparison.

\subsection{Application to the associative recall wind tunnel}
\label{sec:theory-recall}

In the associative recall task, each sequence contains $N$ cue--target pairs $(c_i, t_i)$ followed by probe cues. Unlike the other tasks which test belief accumulation or transport, associative recall tests the \emph{binding} primitive: the model must store all pairs and retrieve by content when the probe arrives. Success requires content-based routing---identifying which stored pair matches the probe---rather than sequential Bayesian updating.

\subsection{Implications for model evaluation}

The theoretical results above imply a practical diagnostic:
\emph{a model that achieves the correct posterior entropy at every position is functionally Bayesian---it produces predictions with the same uncertainty profile as the exact posterior.} Combined with the cross-entropy training objective (whose unique population minimizer is the Bayesian posterior predictive), low entropy calibration error provides strong evidence for Bayesian computation.

\paragraph{Beyond entropy: full distributional verification.}
Entropy-matching is necessary but not sufficient for distributional convergence in general, since different distributions can share the same entropy. In our wind tunnels, however, entropy serves as a \emph{diagnostic sufficient statistic}: the structured nature of the tasks (discrete elimination over bijections, recursive filtering over HMM states) strongly constrains the space of distributions achieving a given entropy. To confirm full distributional equivalence, we directly verify via KL divergence that our trained transformers match the complete Bayesian posterior (\Cref{tab:kl_tvd}). Throughout this paper we report entropy MAE as our primary metric because it provides an interpretable, bit-level measure that generalizes across tasks.

\begin{table}[h]
\centering
\caption{Full distributional verification: KL divergence (nats) and total variation distance between transformer predictions and exact Bayesian posteriors. Bijection results are stratified by key type: ``new'' keys (first occurrence) vs.\ ``repeated'' keys (seen before in the sequence). Values are averaged over 2{,}000 held-out sequences.}
\label{tab:kl_tvd}
\begin{small}
\begin{tabular}{lcc}
\toprule
\textbf{Condition} & \textbf{KL}$(\text{model}\|\text{Bayes})$ & \textbf{TVD} \\
\midrule
\multicolumn{3}{l}{\emph{Bijection task}} \\
\quad New keys & $8.2 \times 10^{-4}$ & 0.027 \\
\quad Repeated keys & $< 10^{-6}$ & $< 10^{-4}$ \\
\midrule
\multicolumn{3}{l}{\emph{HMM task ($K=20$)}} \\
\quad All positions & $1.1 \times 10^{-4}$ & 0.018 \\
\bottomrule
\end{tabular}
\end{small}
\end{table}

Evaluating the \textbf{entropy calibration error}
\begin{equation}
\text{MAE}
=
\frac{1}{L}\sum_k
\left|H_{\text{model}}(k) - H_{\text{Bayes}}(k)\right|
\label{eq:mae}
\end{equation}
therefore provides a direct, bit-level measure of Bayesian correctness, independent of accuracy or perplexity.

In later sections we show that transformers achieve near-perfect calibration, while matched MLPs do not.

\section{Experimental Design}
\label{sec:experiments}

We evaluate whether small transformers can realize exact Bayesian inference by placing them in four
controlled ``Bayesian wind tunnels'' where memorization is computationally infeasible and the analytic posterior is
known in closed form. The four tasks---bijection learning, Hidden Markov Model (HMM) state tracking, Bayesian regression, and associative recall---
probe different inference structures. Bijections require discrete hypothesis elimination; HMMs require 
recursive integration of stochastic transitions and emission likelihoods.

Across both settings, the evaluation criterion is simple:  
\emph{does the model's predictive entropy $H_{\text{model}}$ match the analytic posterior entropy 
$H_{\text{Bayes}}$ at every position?}

We measure this using mean absolute entropy error (MAE),
\begin{equation}
\text{MAE}=\frac{1}{L}\sum_{t=1}^{L} 
\bigl| H_{\text{model}}(t) - H_{\text{Bayes}}(t) \bigr|,
\label{eq:mae_def}
\end{equation}
where $L$ is the number of supervised prediction positions. Because each training instance uses a fresh
bijection or a fresh HMM, memorization is infeasible; the model must perform genuine in-context inference.

\subsection{Task 1: Bijection Learning}
\label{sec:exp-bijection}

Each sequence is derived from a new random bijection 
$\pi:\{1,\dots,V\}\rightarrow\{1,\dots,V\}$ with $V=20$.  
At position $k$, the model has observed $k-1$ distinct input--output pairs and must predict 
$\pi(x_k)$. Because inputs never repeat, the Bayes-optimal posterior over $\pi(x_k)$ is uniform over the 
$V-k+1$ unseen values.

\paragraph{Bayesian ground truth.}
Let $\mathcal{O}_{k-1}$ be observed outputs. Then
\[
p(\pi(x_k)=y \mid \text{context})=
\begin{cases}
\frac{1}{V-k+1}, & y\notin\mathcal{O}_{k-1},\\
0, & y\in\mathcal{O}_{k-1},
\end{cases}
\]
with entropy $H_{\text{Bayes}}(k)=\log_2(V-k+1)$.

\paragraph{Evaluation.}
We compute MAE over a held-out set of 2{,}000 bijections.  
Because $20!\approx 2.4\times 10^{18}$ possible bijections exist and training uses only 
$10^5$ samples, no bijection is seen twice; the task enforces true hypothesis elimination.

\paragraph{Sequence format.}
Each training example is tokenized as
\[
[x_1,\,y_1,\,\mathrm{SEP},\,x_2,\,y_2,\,\mathrm{SEP},\,\dots,\,x_{19},\,\mathrm{SEP}],
\]
with teacher forcing at every $y_k$ position.

\subsection{Task 2: Hidden Markov Model State Tracking}
\label{sec:exp-hmm}

The second wind tunnel probes a qualitatively different inferential structure: recursive belief 
updating. Each sequence is derived from a fresh HMM with $S=5$ hidden states and $V=5$ observation 
symbols. Transition rows and emission rows are drawn independently from a symmetric
Dirichlet distribution with all concentration parameters equal to 1 (i.e.,
$\text{Dirichlet}(1,1,1,1,1)$),
ensuring diverse and non-degenerate dynamics. The initial state distribution $\pi_0$ is also drawn from $\text{Dirichlet}(1,1,1,1,1)$.

\paragraph{Sequence format.}
Each sequence contains:
\begin{itemize}[leftmargin=1.2em]
\item a 10-token \textbf{header} encoding flattened $T$ and $E$, and
\item $K$ observation--prediction pairs, each consisting of:
  \begin{itemize}
  \item the observed symbol $o_t$,
  \item a supervised prediction at that same position for $p(s_t\mid o_{1:t})$.
  \end{itemize}
\end{itemize}

\paragraph{Bayesian ground truth: forward algorithm.}
For each HMM and for each time $t$ we compute
\begin{equation}
\alpha_t(s)\propto E(o_t\mid s)\sum_{s'} T(s\mid s')\alpha_{t-1}(s'),
\label{eq:forward}
\end{equation}
normalized to $\sum_s\alpha_t(s)=1$.  
The exact posterior entropy is
\[
H_{\text{Bayes}}(t)=-\sum_{s=1}^{S}\alpha_t(s)\log_2\alpha_t(s).
\]

\paragraph{Evaluation lengths.}
Models are trained on sequences with $K=20$ prediction positions and evaluated on:
\begin{itemize}[leftmargin=1.2em]
\item $K=20$ (validation: within training horizon),
\item $K=30$ ($1.5\times$ training length),
\item $K=50$ ($2.5\times$ training length).
\end{itemize}
This tests whether the model has learned a position-independent recursive algorithm or has merely
memorized a finite-horizon computation.

\paragraph{Why memorization is computationally infeasible.}
Each sequence uses new $T$, $E$ matrices and new stochastic emission trajectories.
The space of possible HMMs exceeds $10^{40}$ even under coarse discretization, ensuring that learned
behavior cannot rely on recall of any particular HMM.

\subsection{Task 3: Associative Recall}
\label{sec:exp-recall}

To isolate the \emph{random-access binding} primitive, we use a standard associative recall task: the model must retrieve a target given a probe cue from a set of cue--target pairs presented in context.

\paragraph{Task format.}
Each sequence contains $N$ cue--target pairs $(c_i, t_i)$ followed by $Q$ probe cues. Cues and targets are drawn uniformly from disjoint vocabularies of size 256 each (total vocabulary 522 including special tokens). The model must predict the associated target for each probe. With $N=16$ pairs and $Q=3$ probes, sequence length is 137 tokens.

\paragraph{Experimental setup.}
\begin{itemize}[leftmargin=1.2em]
\item \textbf{Training:} 50,000 sequences, 30 epochs
\item \textbf{Models:} Transformer, Mamba, LSTM with $d_{\mathrm{model}}=128$, comparable parameter counts ($\sim$570k--660k)
\item \textbf{Metric:} Exact-match accuracy on probe predictions
\end{itemize}

\paragraph{Why this tests binding.}
Unlike bijection (where the hypothesis space shrinks predictably) or HMM (where beliefs evolve through dynamics), associative recall requires \emph{late, content-dependent retrieval}: the model cannot know which cue--target pairs matter until the probe arrives. This requires storing all pairs and retrieving by content---the binding primitive.

\paragraph{Results preview.}
At $N=16$ pairs: Transformer achieves 100\% accuracy (by epoch 12); Mamba achieves 97.8\% (epoch 30); LSTM achieves 0.5\% (random chance). The 2.5$\times$ longer training for Mamba and its imperfect accuracy suggest that selective SSM routing can approximate but not fully implement random-access binding. Full results appear in \Cref{sec:results-arch}.

\subsection{Task 4: Bayesian Regression}
\label{sec:exp-regression}

To test continuous latent variables, we use multivariate linear regression with a Gaussian prior.

\paragraph{Task format.}
Each sequence derives from a fresh linear regression with weights $w \sim \mathcal{N}(0, I_d)$ where $d=3$. Observations follow $y_i = w^\top x_i + \epsilon_i$ with $\epsilon_i \sim \mathcal{N}(0, 0.25)$ and inputs $x_i \sim \mathcal{N}(0, I_d)$. The model observes $k=6$ context pairs before predicting at a query point drawn from the same input distribution. Outputs are discretized into 41 bins uniformly spaced over $[-5, 5]$, covering $>99.9\%$ of the predictive mass given the prior and noise scales.

\paragraph{Bayesian ground truth.}
The posterior over $w$ given context is Gaussian with closed-form mean and covariance, yielding a Gaussian predictive distribution at any query point. This enables exact computation of the Bayesian predictive for comparison.

\paragraph{Why this tests continuous inference.}
Unlike bijection (discrete hypotheses) or HMM (discrete states), regression requires inference over a continuous parameter space. The Gaussian structure ensures the posterior remains tractable while testing whether transformers can calibrate uncertainty in continuous settings.

\subsection{Architectures}
\label{sec:exp-architectures}

\paragraph{Transformers.}  
We use small but realistic transformer stacks:

\begin{itemize}[leftmargin=1.2em]
\item \textbf{Bijection transformer (2.67M):}  
6 layers, 6 heads, $d_{\mathrm{model}}=192$, $d_{\mathrm{ffn}}=768$.
\item \textbf{HMM transformer (2.68M):}  
9 layers, 8 heads, $d_{\mathrm{model}}=256$, $d_{\mathrm{ffn}}=1024$.
\end{itemize}

Both use learned token embeddings, learned absolute positional embeddings, pre-norm residual blocks, 
and standard multi-head self-attention.

\paragraph{Mamba (selective state-space model).}
To test whether attention specifically is required, or whether content-based routing more generally suffices, we train Mamba models \citep{gu2024mamba}:
\begin{itemize}[leftmargin=1.2em]
\item \textbf{Bijection Mamba (3.77M):} 9 layers, $d_{\mathrm{model}}=256$, state dimension 16.
\item \textbf{HMM Mamba (3.77M):} 9 layers, $d_{\mathrm{model}}=256$, state dimension 16.
\end{itemize}
Mamba replaces attention with a selective state-space mechanism: input-dependent matrices $(\Delta, B, C)$ gate the recurrent state update. This provides content-based routing without explicit query-key matching.

\paragraph{LSTM baselines.}
To test whether recurrence alone suffices, we train LSTMs:
\begin{itemize}[leftmargin=1.2em]
\item \textbf{Bijection LSTM (4.77M):} 9 layers, hidden dimension 256.
\item \textbf{HMM LSTM (4.77M):} 9 layers, hidden dimension 256.
\end{itemize}
LSTMs have recurrent state but use fixed gating---forget, input, and output gates do not depend on content relationships across sequence positions. This tests whether recurrence without content-based routing can implement Bayesian inference.

\paragraph{Capacity-matched MLP baselines.}
To isolate the role of sequence structure entirely, we train MLPs with 18--20 layers, width 384--400, residual connections and layer normalization (parameter counts match transformers within 1\%). The MLP receives the \emph{entire context sequence} (all previous tokens concatenated) as input at each prediction position, but processes this flattened input without any attention or recurrence---testing whether feedforward computation over concatenated context suffices for Bayesian inference. We chose this design to give the MLP maximal access to context information; stronger attention-free baselines (e.g., Perceiver-style pooling, permutation-invariant aggregation) could narrow the gap but would reintroduce forms of content-based routing.

\subsection{Training Protocol}
\label{sec:exp-training}

Training is identical across architectures for each task.

\paragraph{Optimization.}
AdamW with $\beta_1=0.9$, $\beta_2=0.999$, weight decay 0.01, gradient clipping at 1.0. Batch size is 64 for all tasks.

\paragraph{Learning rates and training steps.}
\begin{itemize}[leftmargin=1.2em]
\item Bijections: constant $10^{-3}$ for 150k steps.
\item HMMs: $3\times 10^{-4}$ with 1000-step warmup and cosine decay for 100k steps.
\item Regression: $5\times 10^{-4}$ for 50k steps.
\end{itemize}

\paragraph{Data sampling.}
Every batch draws fresh bijections or fresh HMMs; sequences never repeat.

\paragraph{Teacher forcing.}
Cross-entropy loss is applied at each supervised prediction position.

\paragraph{Ablation stability.}
Layer-wise and head-wise ablations are reported as averages over three random seeds; the HMM 
length-generalization results are also evaluated across multiple seeds to ensure robustness.

\section{Results: Transformers Track the Bayesian Posterior}
\label{sec:results}

We evaluate whether transformers lie on the analytic Bayesian manifold using two behavioral tests:
(1) pointwise calibration---does $H_{\text{model}}(t)$ match $H_{\text{Bayes}}(t)$ at every position?  
(2) generalization---does the learned computation extend to unseen bijections, unseen HMMs, and longer sequences?

We present results for bijections and HMMs in parallel, followed by MLP controls and multi-seed robustness.

\subsection{Bijection Wind Tunnel: Exact Hypothesis Elimination}
\label{sec:results-bijection}

A 2.67M-parameter transformer converges to the analytic posterior with near machine precision.
\Cref{fig:transformer_vs_mlp} shows the predictive entropy
\[
H_{\text{model}}(k)=-\sum_y p_{\text{model}}(y\mid x_k,\text{context})\log_2 p_{\text{model}}(y\mid x_k,\text{context})
\]
overlaid on $H_{\text{Bayes}}(k)=\log_2(V-k+1)$.  
The curves coincide across all positions, including late steps where only 2--4 hypotheses remain.

Quantitatively, the transformer achieves
\[
\text{MAE}=3\times 10^{-3} \;\text{bits},
\]
averaged over 2{,}000 held-out bijections.
This error is smaller than single-precision numerical noise in the analytic posterior. All analytic Bayes computations use double-precision arithmetic, so reported errors are meaningful. We verify full distributional agreement via KL and TVD: across all positions, $\text{KL}(\text{model}\|\text{Bayes}) < 0.01$ nats and TVD $< 3\%$, with per-position analysis confirming agreement holds across the full entropy range (high, moderate, and low entropy positions).

\paragraph{Per-sequence evidence.}
Aggregate calibration could hide averaging artifacts.  
\Cref{fig:grid_hybrid} plots eight individual entropy trajectories.  
Each displays the characteristic staircase pattern:
entropy drops discretely whenever a new input--output pair eliminates hypotheses,  
and collapses to near zero when an input repeats and the mapping is known.  
The model performs stepwise Bayesian elimination, reproducing the curve sequence by sequence rather than merely matching it in expectation.

\paragraph{Inside-model consistency.}
Layer-wise ablations (\Cref{fig:layer0_vs_ffn}) show that removing any block increases error by 
more than an order of magnitude, confirming a deeply compositional computation.  
Head-wise ablations (\Cref{fig:head_ablation}) identify a single Layer~0 hypothesis-frame head whose removal is uniquely destructive,
consistent with the geometric analysis in \Cref{sec:mechanism}.

\subsection{HMM Wind Tunnel: Recursive Bayesian State Tracking}
\label{sec:results-hmm}

The 2.68M-parameter transformer also learns the forward algorithm for HMM inference.

\paragraph{Within training horizon (K=20).}
At $t\leq 20$, model entropy tracks the exact forward-recursion entropy with
\[
\text{MAE}=7.5\times 10^{-5} \;\text{bits}.
\]
The two curves are visually indistinguishable (\Cref{fig:hmm_entropy_lengths}). (Note: The architecture comparison in \Cref{tab:arch_comparison} uses a smaller parameter-matched model at 2.7M for fair cross-architecture comparison, yielding MAE $= 0.049$ bits; the result here uses the task-optimized 2.68M transformer.)

\paragraph{Beyond training horizon (K=30, K=50).}
To test algorithmic generalization, we roll the model out to $1.5\times$ and $2.5\times$ training length.  
The transformer stays close to the analytic posterior:
\[
\text{MAE}(K=30)=1.25\times 10^{-2},\qquad
\text{MAE}(K=50)=2.88\times 10^{-2}.
\]
Errors increase smoothly with $t$, with \emph{no} discontinuity at $t=20$ (the training boundary).  
This is strong evidence of a position-independent recursive algorithm rather than a finite-horizon memorized computation.

\paragraph{Per-position calibration.}
\Cref{fig:hmm_perpos_error} shows absolute error $|H_{\text{model}}(t)-H_{\text{Bayes}}(t)|$.  
Three patterns emerge:
\begin{enumerate}[leftmargin=1.2em]
\item early positions are slightly noisier (uncertain initial state);
\item mid-sequence positions achieve near-zero error at all lengths;
\item late positions degrade smoothly with sequence length, consistent with accumulated numerical drift.
\end{enumerate}

\paragraph{Per-sequence dynamics.}
\Cref{fig:hmm_entropy_grid} shows the model tracking sequence-specific fluctuations:  
entropy dips when emissions strongly identify states and rises when observations are ambiguous.  
The transformer captures these dynamics exactly.

\paragraph{Semantic invariance under hidden-state relabeling.}
Hidden-state indices are purely symbolic: permuting the labels corresponds to the same latent process.
We sample a random permutation $\sigma$ of $\{1,\dots,S\}$ and apply it to the HMM parameters by permuting
rows and columns of $T$ (i.e., $T'_{\sigma(i),\sigma(j)} = T_{i,j}$) and permuting rows of $E$
(i.e., $E'_{\sigma(i),o} = E_{i,o}$). We then recompute the analytic posterior under $(T',E')$ and
evaluate the model on sequences generated from the permuted HMM. If the model implements Bayesian filtering
rather than associating meaning with specific state IDs, its entropy calibration should be unchanged up to
numerical noise. \Cref{fig:hmm_perm_invariance} shows MAE before vs.\ after permutation lies on the diagonal,
with $\Delta$MAE concentrated near zero.

\subsection{Length Generalization Requires Late-Layer Attention}
\label{sec:results-hmm-no-late-attn}

To identify which components support stable rollout, we train a variant transformer in which 
attention is disabled in the top two layers but FFNs and residuals remain intact.

The no-late-attention model fits the training horizon reasonably well 
($1.57\times 10^{-3}$ bits),  
but breaks down under rollout:
\[
\text{MAE}(K=30)=5.55\times 10^{-1},\qquad
\text{MAE}(K=50)=1.79.
\]
The degradation factor grows from $21\times$ (at $K=20$) to $62\times$ (at $K=50$), 
demonstrating that late-layer attention is not required for fitting 
$K=20$ but \emph{is essential} for stable long-horizon Bayesian updates 
(\Cref{fig:hmm_no_late_attn_scaling}).

\subsection{Associative Recall: Content-Based Retrieval}
\label{sec:results-recall}

The associative recall task tests whether architectures can retrieve stored information by content---the \emph{binding} primitive. Unlike bijection or HMM tasks where the relevant context is determined by position or dynamics, here the model must identify which cue--target pair to retrieve based on a probe that arrives only at test time.

The transformer achieves 100\% accuracy on held-out sequences, demonstrating perfect content-based retrieval. This requires the attention mechanism: the probe cue must match against stored cues to route information from the corresponding target. MLPs did not succeed on this task under our training protocol (accuracy at chance), as they process each position independently without cross-token interaction.

\subsection{Continuous Bayesian Regression}
\label{sec:results-regression}

To test continuous latent variables, we evaluate on multivariate linear regression with a Gaussian prior. The transformer (857k params, 4 layers) predicts discretized outputs over 41 bins, achieving KL $= 0.034$ nats to the closed-form Bayesian predictive; the capacity-matched MLP achieves KL $= 0.22$---6$\times$ worse. The gap is smaller than for discrete tasks, but the transformer still substantially outperforms on uncertainty calibration, demonstrating that the Bayesian inference capability extends to continuous parameter spaces.

\subsection{Architectural Comparison: Which Primitives Does Each Architecture Realize?}
\label{sec:results-arch}

To understand which architectural ingredients enable Bayesian inference, we decompose it into three \emph{inference primitives} and test which architectures realize each:
\begin{itemize}[leftmargin=1.2em]
\item \textbf{Belief accumulation}: integrating evidence into a running posterior (tested by bijection elimination)
\item \textbf{Belief transport}: propagating beliefs through stochastic dynamics (tested by HMM filtering)
\item \textbf{Random-access binding}: retrieving stored hypotheses by content (tested by associative recall)
\end{itemize}

\Cref{tab:arch_comparison} summarizes results across all three tasks.

\begin{table}[htbp]
\centering
\caption{\textbf{Architecture comparison across inference primitives.} Bijection/HMM: entropy MAE in bits (lower is better; mean $\pm$ std over 5 seeds for Mamba/LSTM). Recall: accuracy (higher is better). Each task tests a different primitive. Transformers realize all three; Mamba realizes accumulation and transport; LSTMs realize only accumulation (of static sufficient statistics); MLPs realize none.}
\label{tab:arch_comparison}
\begin{tabular}{lcccc}
\toprule
Architecture & Bijection & HMM & Assoc.\ Recall & Primitives \\
 & (accumulation) & (transport) & (binding) & realized \\
\midrule
Transformer & 0.007 & 0.049 & \textbf{100\%} & All 3 \\
Mamba & 0.010$\pm$.001 & \textbf{0.024$\pm$.009} & 97.8\% (slow) & 2 of 3 \\
LSTM & 0.009$\pm$.000 & 0.411$\pm$.003 (fail) & 0.5\% (fail) & 1 of 3 \\
MLP & 1.85 & 0.40 & --- & 0 of 3 \\
\bottomrule
\end{tabular}
\end{table}

\paragraph{Transformers realize all three primitives.}
Transformers achieve near-exact Bayesian posteriors on bijection and HMM, and perfect accuracy on associative recall with 64 cue--target pairs. Attention provides all three capabilities: accumulation via the residual stream, transport via content-based routing, and binding via query-key matching.

\paragraph{Mamba realizes accumulation and transport, but struggles with binding.}
Mamba \emph{outperforms} the transformer on HMM tracking (0.024 vs 0.049 bits MAE), demonstrating that its selective state-space mechanism excels at belief transport. However, on associative recall, Mamba reaches only 97.8\% accuracy and requires 2.5$\times$ more training epochs than the transformer. This matches the finding of \citet{jelassi2024repeat} that SSMs struggle with retrieval tasks---Mamba's selection mechanism implements content-based routing on \emph{transition dynamics}, which enables transport but not direct random-access retrieval.

\paragraph{LSTMs realize only accumulation of static sufficient statistics.}
The LSTM achieves 0.009 bits on bijection---comparable to Transformer and Mamba---but fails on both HMM (0.416 bits) and associative recall (0.5\%, random chance). The key distinction: bijection elimination admits a \emph{static} sufficient statistic (the set of observed outputs, representable in fixed dimension) that LSTM can track with its recurrent state. But HMM requires a sufficient statistic that \emph{evolves under dynamics}---the belief vector must be transported through the transition matrix at each step---and associative recall requires the statistic to be \emph{indexed by content}. Under our training protocol, LSTM's fixed gating accumulated evidence but did not learn to implement the content-dependent operations required for transport or binding.

\paragraph{MLPs realize no primitives.}
Without sequence structure, MLPs collapse to marginal predictions and fail uniformly.

\paragraph{The primitives taxonomy predicts performance.}
This decomposition explains the full pattern of results: each architecture succeeds precisely on tasks requiring only the primitives it can implement. The taxonomy also resolves apparent contradictions in the literature---Mamba beats transformers on tasks dominated by transport (HMM) but loses on tasks requiring binding (recall).

\begin{figure}[htbp]
  \centering
  \includegraphics[width=\textwidth]{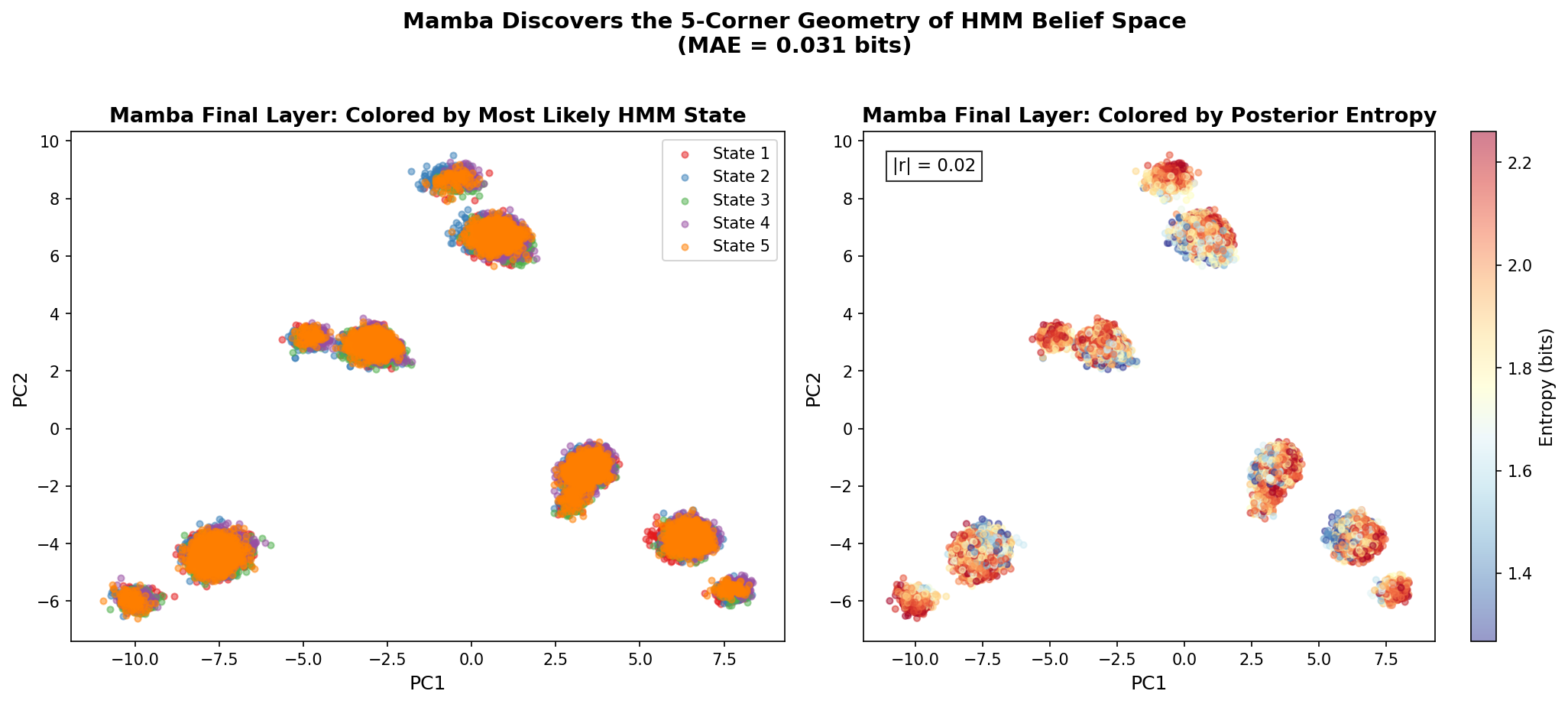}
  \caption{\textbf{Mamba discovers the 5-corner geometry of HMM belief space.}
  Final-layer representations from Mamba on the HMM task (5 hidden states).
  \emph{Left:} Points colored by most likely hidden state reveal five distinct clusters---one per state.
  \emph{Right:} The same points colored by posterior entropy show confidence variation within each cluster (red = low entropy/high confidence, blue = high entropy/uncertainty).
  Mamba has learned that the belief simplex has five corners and organized its representations accordingly. This geometric structure explains its strong performance (MAE = 0.024 bits, 5-seed average).}
  \label{fig:mamba_clusters}
\end{figure}

    \subsection{Multi-Seed Consistency}
\label{sec:results-seed}

To ensure that Bayesian tracking is not an artifact of initialization or optimization noise,
we repeated transformer HMM experiments across \textbf{five independent random seeds}.
Per-position error curves for all seeds (\Cref{fig:overlay_length_generalization_multiseed})
nearly overlap at $K=20$, $K=30$, and $K=50$.

The seed-to-seed variability is negligible compared to the gap between architectures,
confirming that the learned Bayesian algorithm is robust to initialization and
training noise. The Mamba and LSTM HMM results in \Cref{tab:arch_comparison} are 5-seed averages with standard deviations of 0.009 and 0.003 bits respectively---confirming that the large performance gaps between architectures (e.g., LSTM's 0.411 vs Mamba's 0.024 bits on HMM) reflect genuine architectural differences rather than seed-to-seed variation.

\begin{figure}[htbp]
  \centering
  \includegraphics[width=0.8\textwidth]{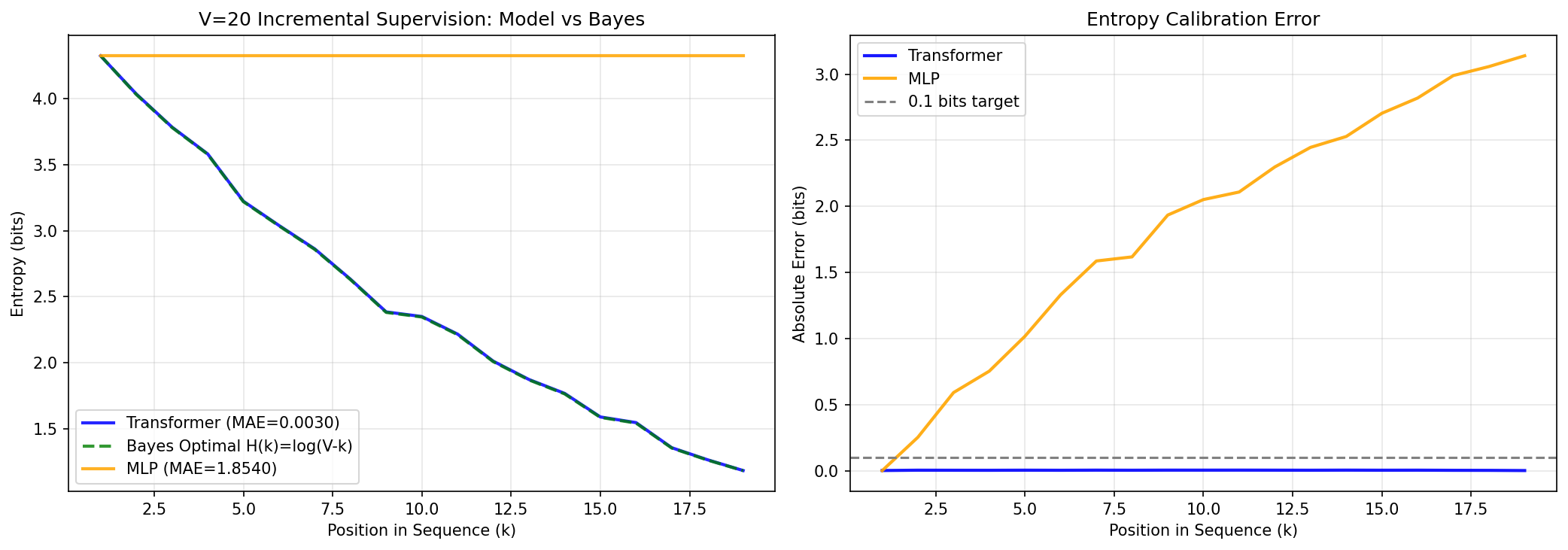}
  \caption{\textbf{Bijection wind tunnel: transformer matches the Bayesian posterior; MLP does not.}
  Entropy trajectories at 150k training steps. The transformer lies essentially on top of the analytic Bayes curve across positions, while the capacity-matched MLP barely reduces uncertainty and fails to implement hypothesis elimination. This is the comparison summarized quantitatively in \Cref{tab:hmm_transformer_vs_mlp} and discussed in \Cref{sec:results-bijection}.}
  \label{fig:transformer_vs_mlp}
\end{figure}

\begin{figure}[htbp]
  \centering
  \includegraphics[width=0.9\textwidth]{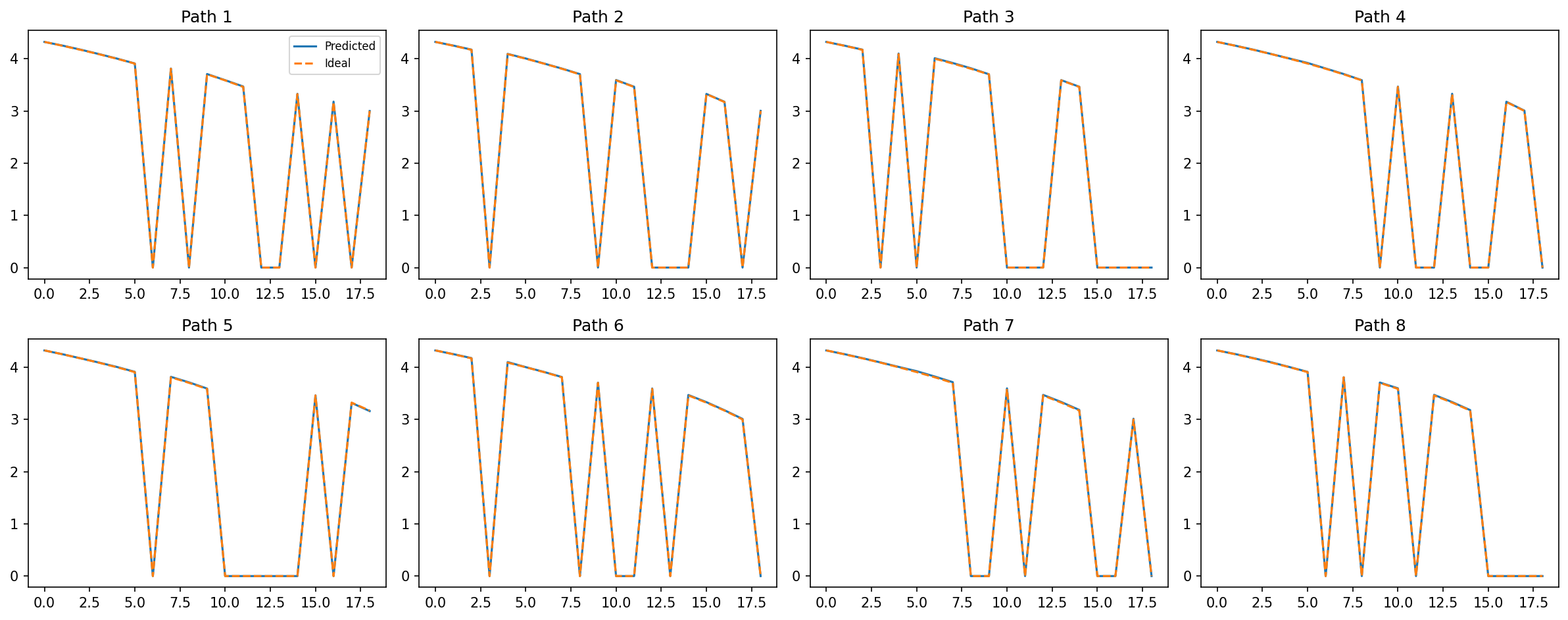}
  \caption{\textbf{Bijection wind tunnel: per-sequence entropy dynamics.}
  Eight randomly chosen bijections from the test set. Each panel shows transformer entropy (solid) and analytic Bayes entropy (dashed) as a function of position.
  The sawtooth pattern---discrete drops when mappings are revealed and collapses to (near) zero when previously seen inputs reappear---confirms that the transformer is performing stepwise hypothesis elimination, not merely matching the Bayes curve in aggregate.}
  \label{fig:grid_hybrid}
\end{figure}

\begin{figure}[htbp]
  \centering
  \includegraphics[width=0.7\textwidth]{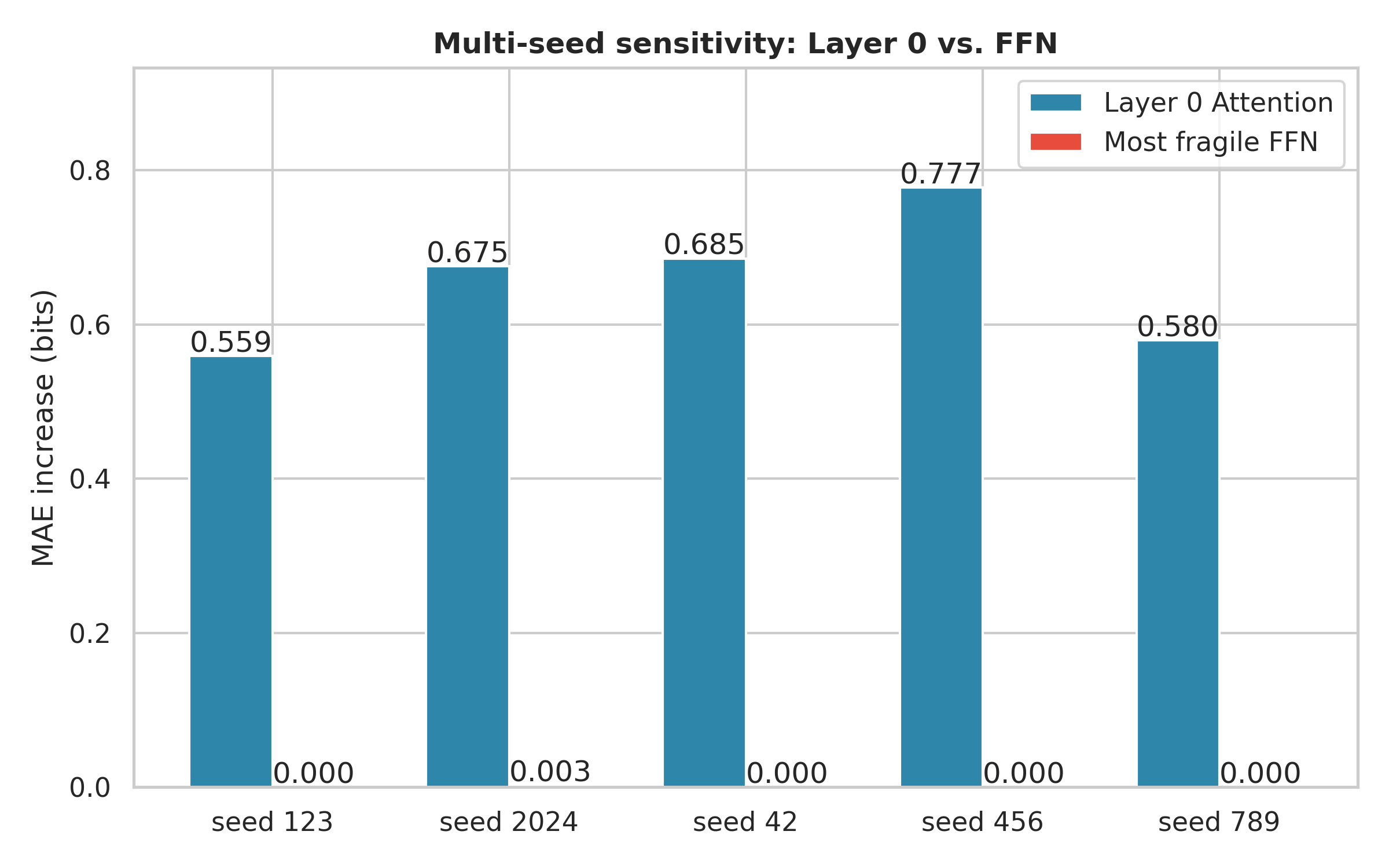}
  \caption{\textbf{Bijection wind tunnel: layer-wise ablation.}
  Mean absolute entropy error (bits) when ablating each layer (attention+FFN) in turn, averaged over seeds.
  Removing any single layer increases calibration error by more than an order of magnitude, showing that the Bayesian computation is genuinely hierarchical and compositional rather than shallow or redundant.}
  \label{fig:layer0_vs_ffn}
\end{figure}

\begin{figure}[htbp]
  \centering
  \includegraphics[width=0.5\linewidth]{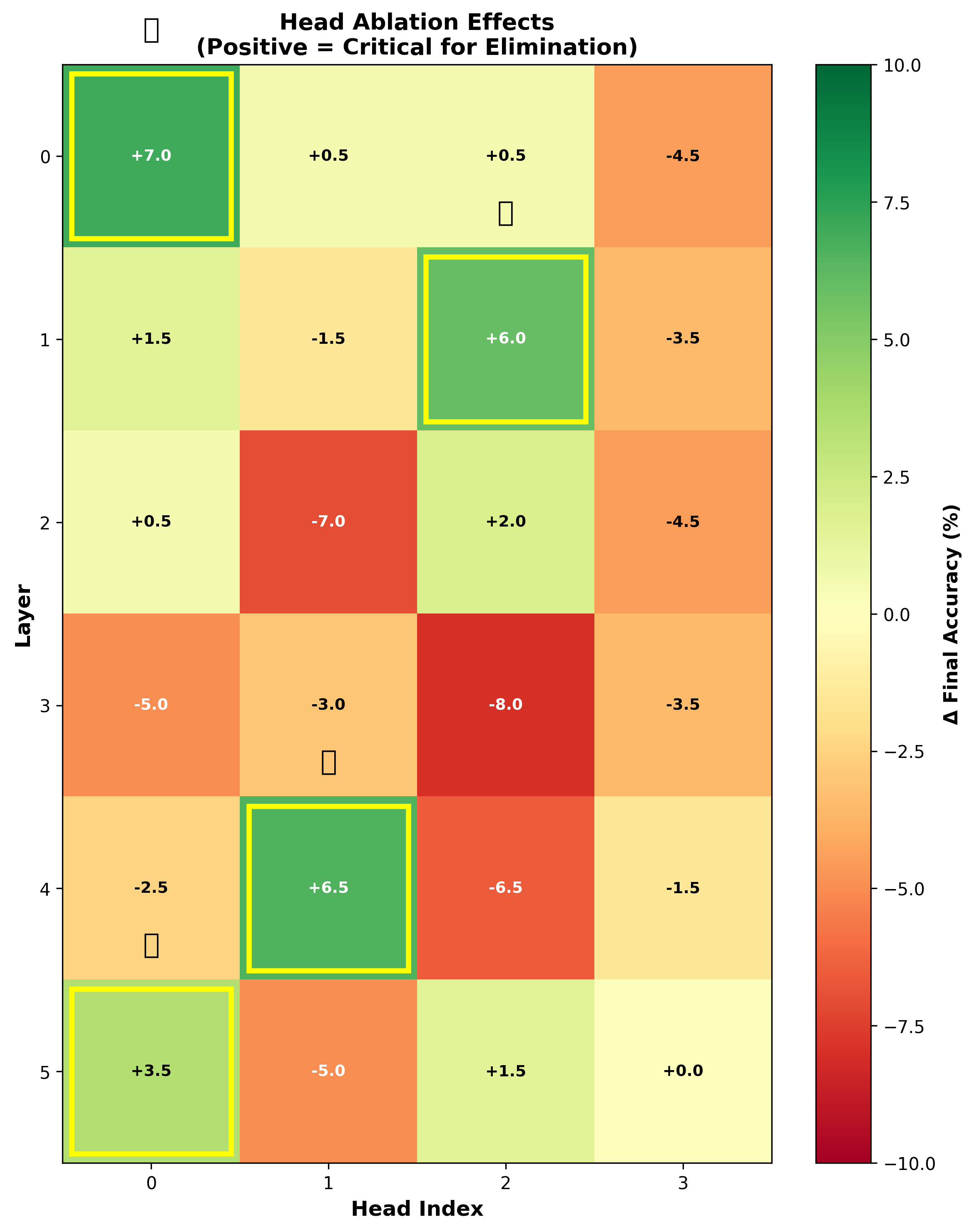}
  \caption{\textbf{Head-wise ablation.}
  Change in mean absolute entropy error when ablating individual attention heads.
  A single Layer-0 ``hypothesis-frame head'' plays a uniquely important role, while many later heads are partially redundant.
  This supports the three-stage picture in \Cref{sec:discussion}: foundational binding, progressive elimination, and value-manifold refinement.}
  \label{fig:head_ablation}
\end{figure}

\begin{figure}[htbp]
  \centering
  \includegraphics[width=0.85\textwidth]{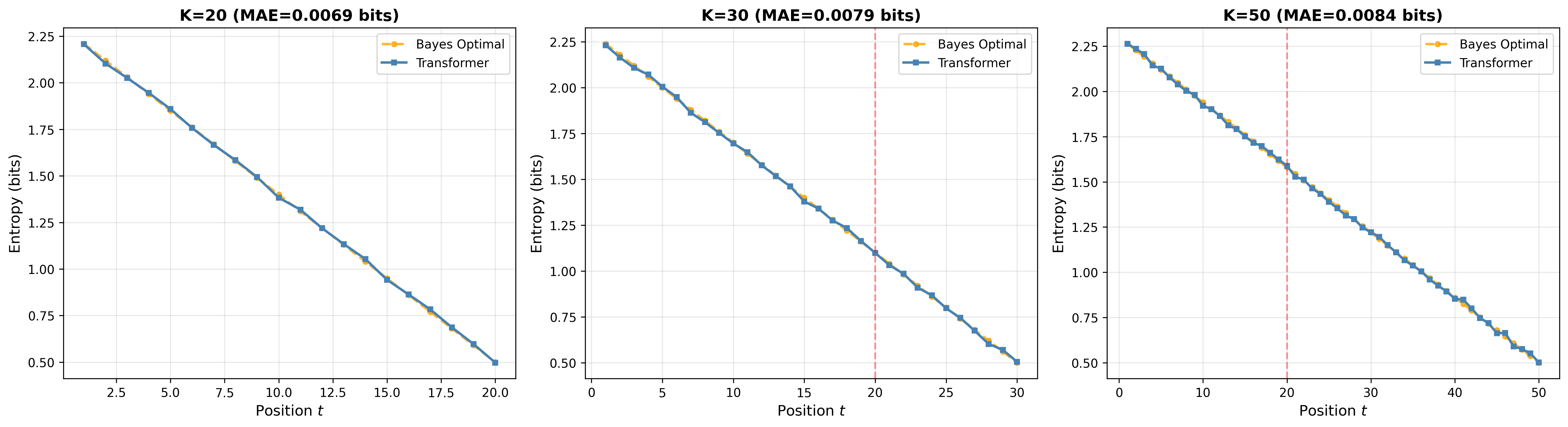}
  \caption{\textbf{HMM wind tunnel: calibration across sequence lengths.}
  Transformer predictive entropy $H_{\text{model}}(t)$ (solid) versus analytic $H_{\text{Bayes}}(t)$ (dashed) at the training length $K=20$ and at $K=30$ and $K=50$.
  At $K=20$ the trajectories overlap almost perfectly; for longer sequences the error grows smoothly with position and shows no kink at the training boundary, indicating a position-independent recursive algorithm rather than finite-horizon memorization.}
  \label{fig:hmm_entropy_lengths}
\end{figure}

\begin{figure}[htbp]
  \centering
  \includegraphics[width=0.75\textwidth]{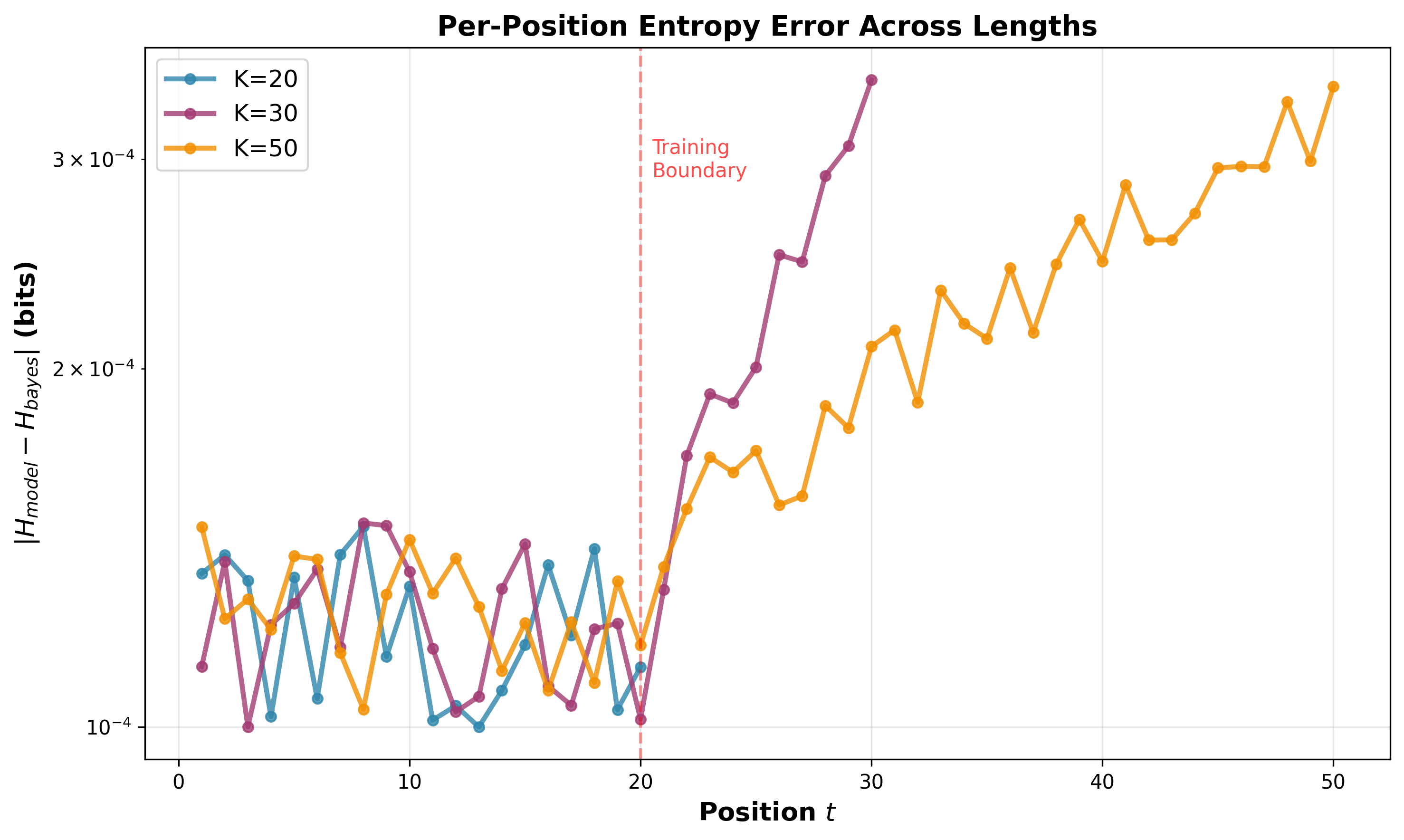}
  \caption{\textbf{HMM wind tunnel: per-position calibration.}
  Absolute entropy error $|H_{\text{model}}(t) - H_{\text{Bayes}}(t)|$ as a function of position for $K=20$, $K=30$, and $K=50$.
  Errors are tiny at the training length and increase gradually with $t$ for extended lengths, again with no discontinuity at $t=20$.}
  \label{fig:hmm_perpos_error}
\end{figure}

\begin{figure}[htbp]
  \centering
  \includegraphics[width=0.9\textwidth]{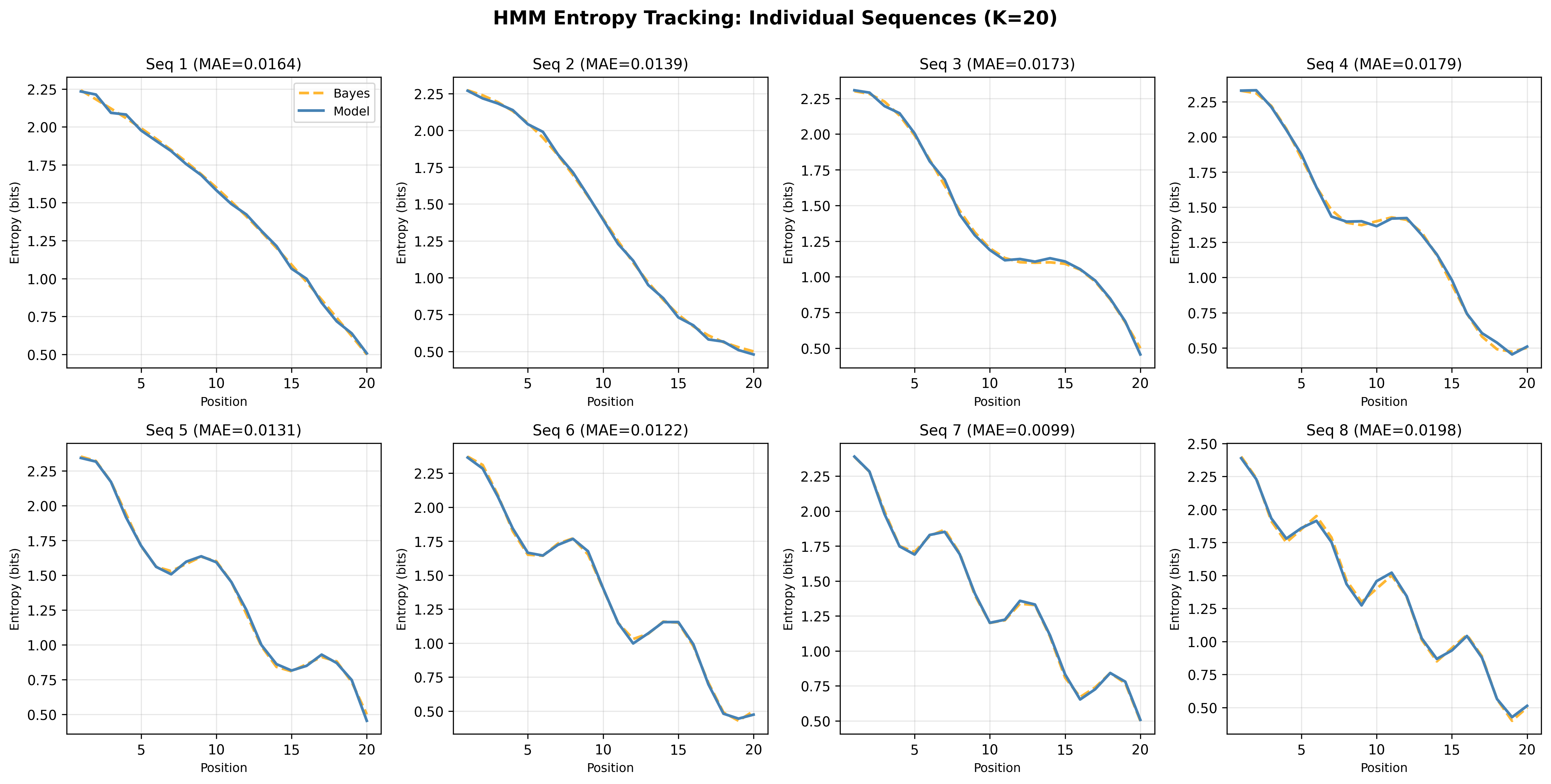}
  \caption{\textbf{HMM wind tunnel: per-sequence entropy dynamics.}
  Entropy trajectories $H_{\text{model}}(t)$ and $H_{\text{Bayes}}(t)$ for eight randomly chosen $K=20$ test HMMs.
  The transformer tracks sequence-specific rises and drops in uncertainty, reflecting the stochastic interplay of transitions and emissions.}
  \label{fig:hmm_entropy_grid}
\end{figure}

\begin{figure}[htbp]
  \centering
  \includegraphics[width=0.6\textwidth]{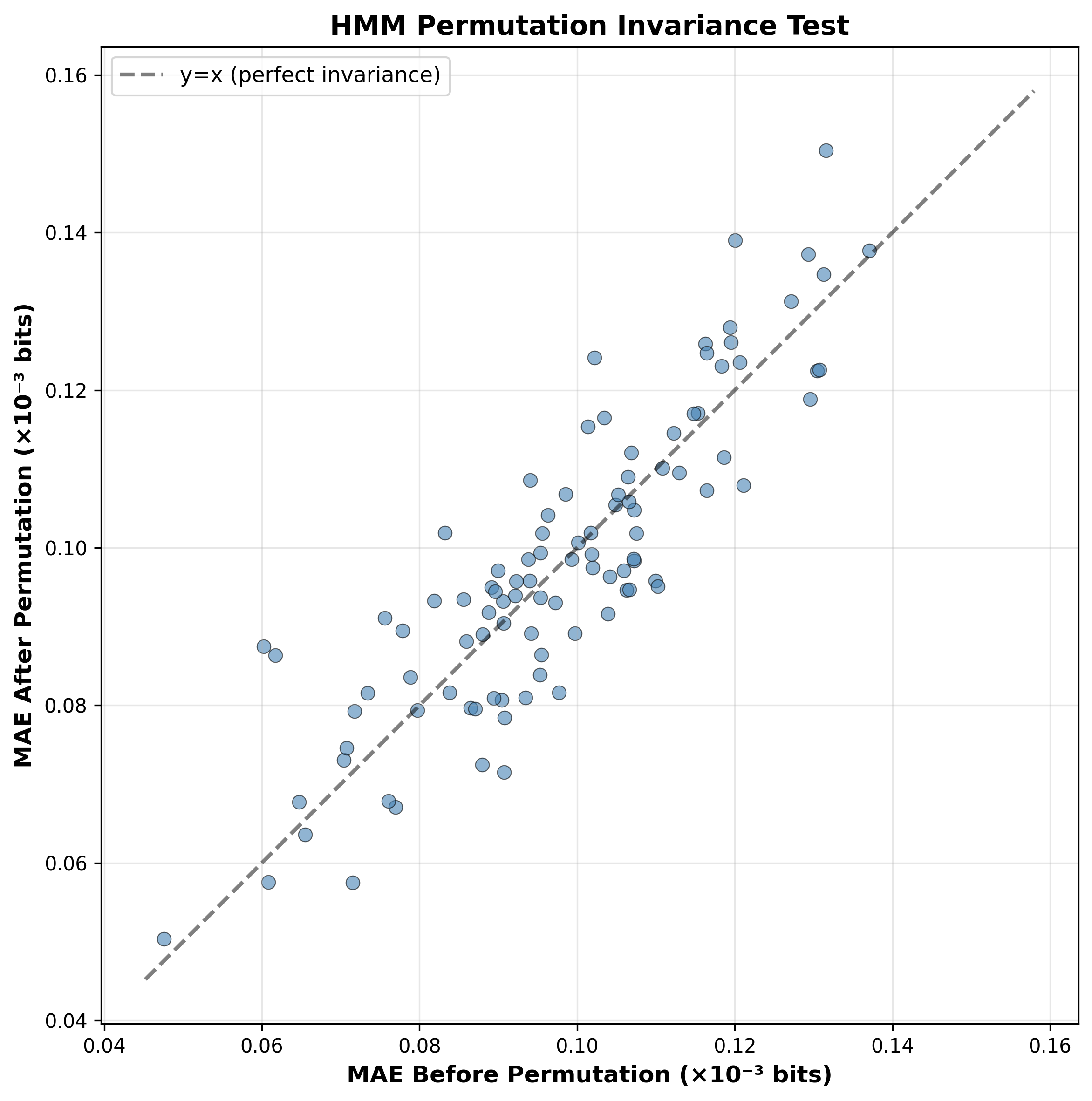}
  \caption{\textbf{Semantic invariance under hidden-state relabeling.}
  Mean absolute entropy error before vs.\ after randomly permuting hidden-state labels in the HMMs.
  Points lie on the diagonal and the distribution of $\Delta$MAE is tightly concentrated near zero, confirming that the transformer's computation is invariant to arbitrary relabelings of the hidden state space.}
  \label{fig:hmm_perm_invariance}
\end{figure}

\begin{figure}[htbp]
  \centering
  \includegraphics[width=0.7\textwidth]{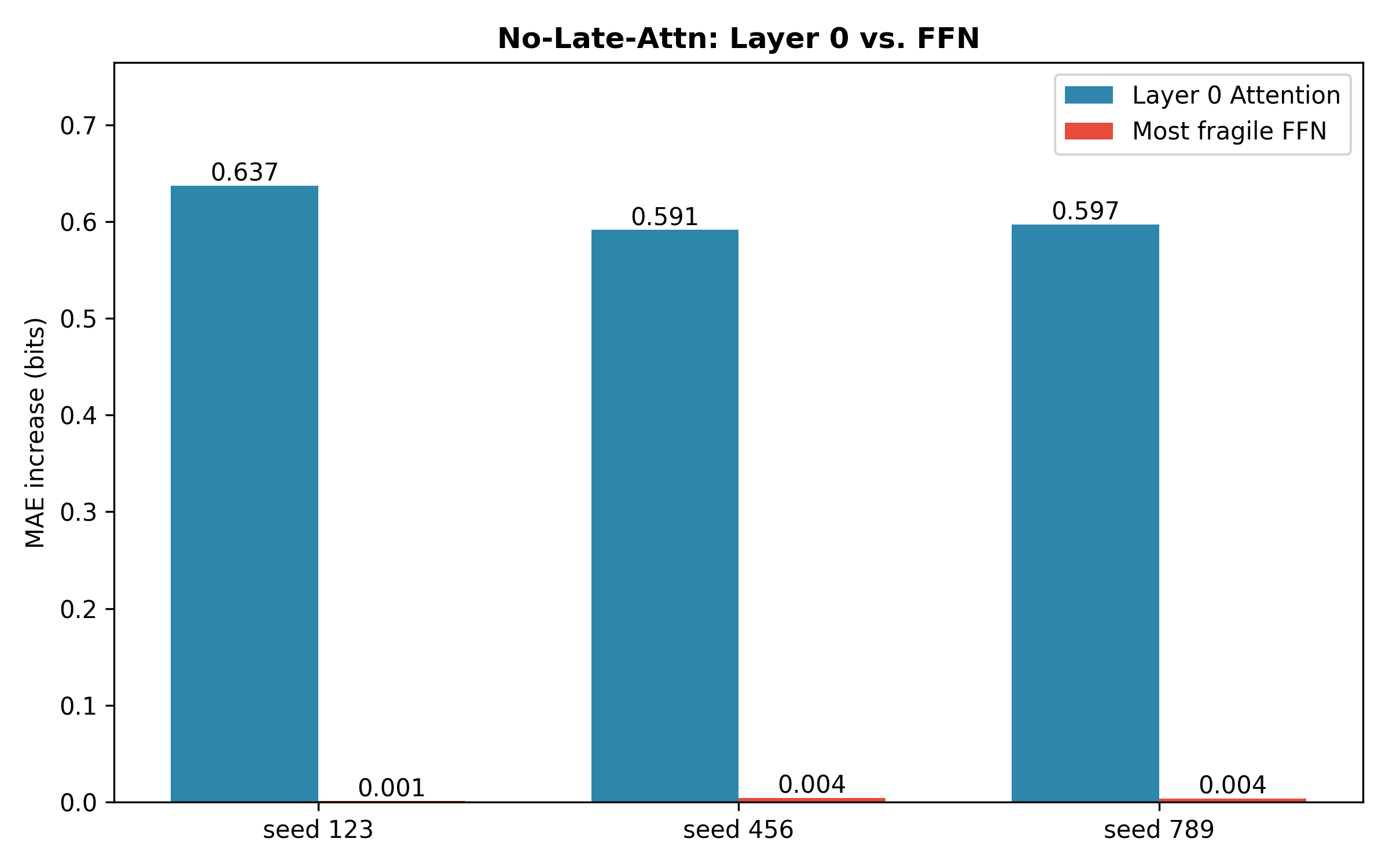}
  \caption{\textbf{Late-layer attention and length generalization.}
  Mean absolute entropy error as a function of sequence length for the full transformer and a variant with attention disabled in the top two layers.
  The no-late-attention model is only modestly worse at the training length but its error explodes on longer sequences, with the degradation factor growing from $\sim 21\times$ at $K=20$ to over $60\times$ at $K=50$.
  Late attention is therefore crucial for stable rollout beyond the training horizon.}
  \label{fig:hmm_no_late_attn_scaling}
\end{figure}

\begin{table}[htbp]
  \centering
  \caption{\textbf{HMM wind tunnel: transformer vs MLP calibration across lengths.}
  Mean absolute entropy error (bits) between model entropy and analytic Bayes entropy.
  The transformer achieves near-perfect calibration and degrades gracefully with length; the capacity-matched MLP fails catastrophically, with errors $\sim 0.4$ bits at all positions and lengths.}
  \label{tab:hmm_transformer_vs_mlp}
  \begin{tabular}{lcc}
    \toprule
    \textbf{Model} & \textbf{$K=20$ (training)} & \textbf{$K=50$ (2.5$\times$ length)} \\
    \midrule
    Transformer (2.68M) & $7.5\times 10^{-5}$ & $2.88\times 10^{-2}$ \\
    MLP (2.70M)         & $4.09\times 10^{-1}$ & $4.02\times 10^{-1}$ \\
    \midrule
    Degradation factor  & $5{,}467\times$ & $14\times$ \\
    \bottomrule
  \end{tabular}
\end{table}

\begin{figure}[htbp]
  \centering
  \includegraphics[width=\textwidth]{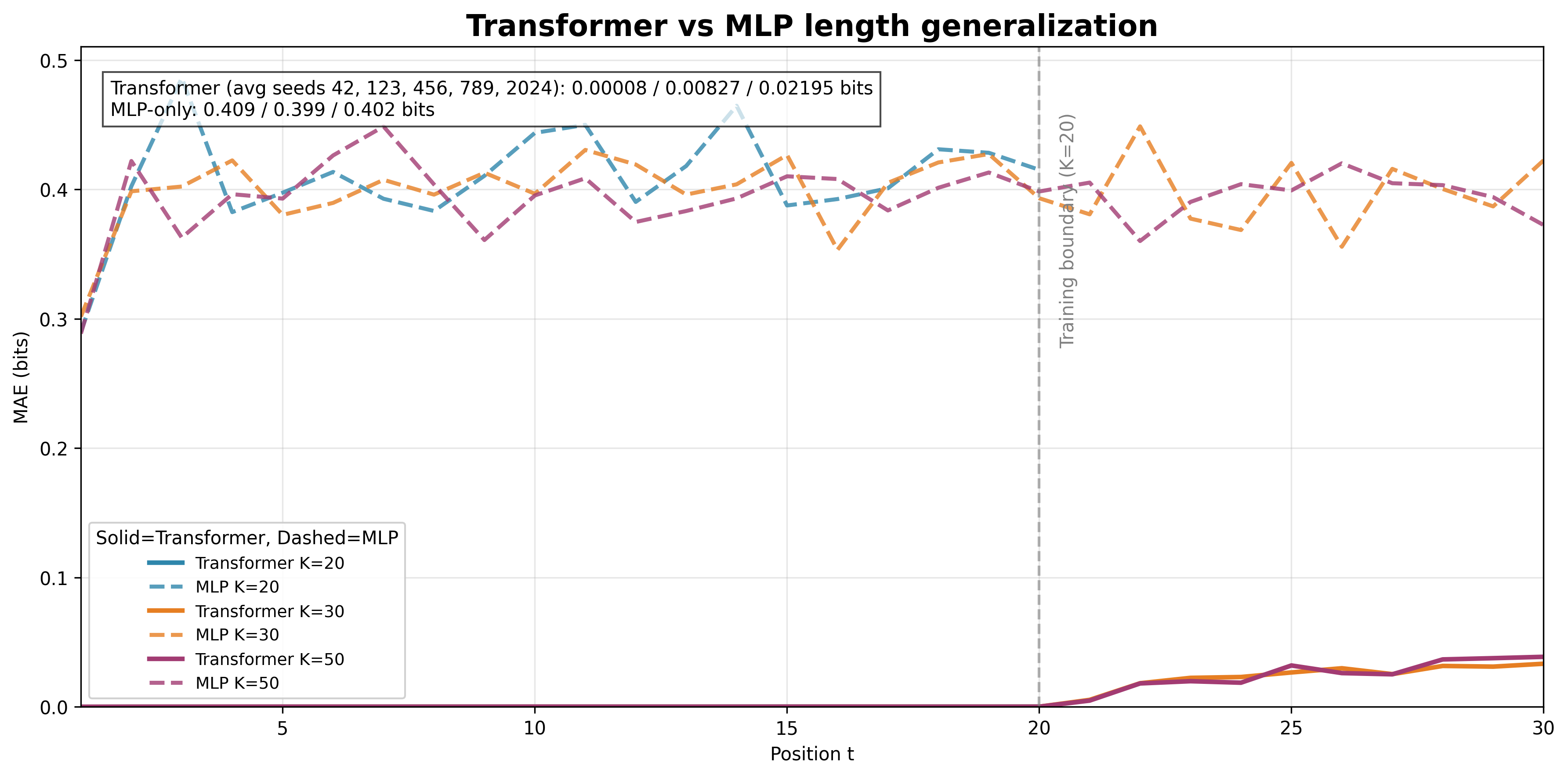}
  \caption{\textbf{HMM wind tunnel: transformer vs MLP length generalization.}
  Per-position mean absolute entropy error for the transformer (solid) and capacity-matched MLP (dashed) at $K=20$ and $K=50$.
  The vertical gray line marks the training boundary at position $t=20$.
  The transformer shows near-zero error at the training length and smooth degradation beyond it; the MLP maintains flat $\sim 0.4$-bit error across positions, indicating failure to learn recursive Bayesian updates.}
  \label{fig:hmm_length_gen_transformer_vs_mlp}
\end{figure}

\begin{figure}[htbp]
  \centering
  \includegraphics[width=\textwidth]{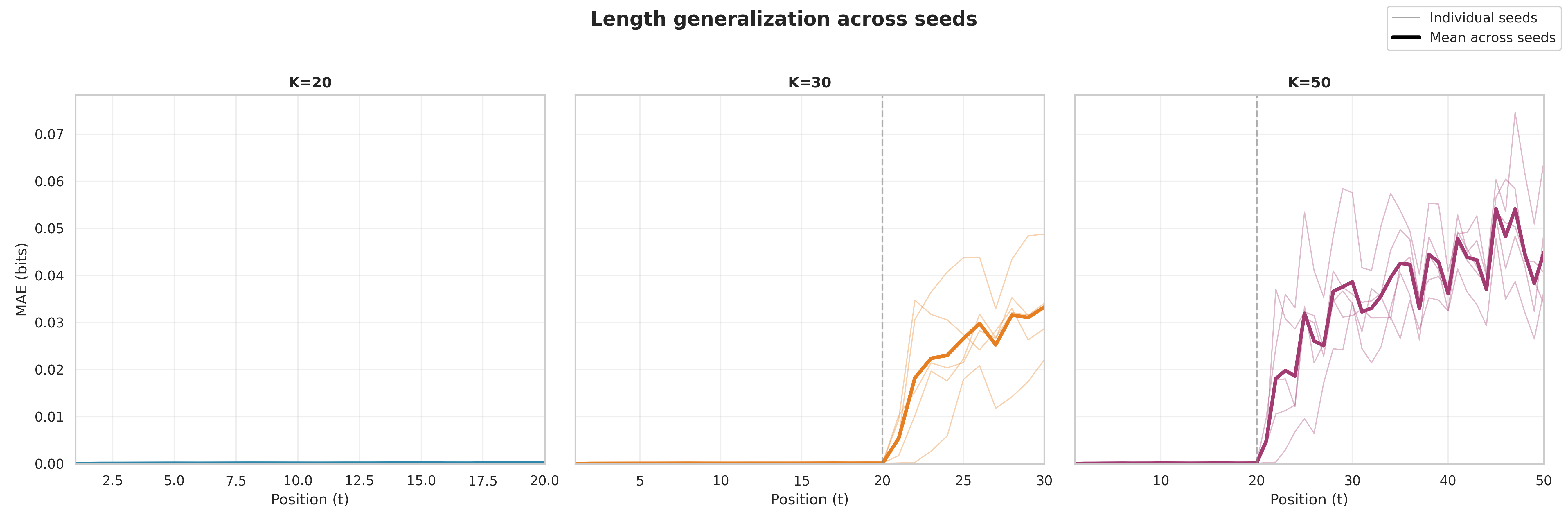}
  \caption{\textbf{Multi-seed robustness of HMM length generalization.}
  Overlay of per-position transformer MAE curves across five random seeds for $K=20$, $K=30$, and $K=50$.
  Seed-to-seed variability is negligible relative to the transformer--MLP gap, showing that the learned Bayesian algorithm is robust to initialization and optimization noise.}
  \label{fig:overlay_length_generalization_multiseed}
\end{figure}

\begin{figure}[htbp]
  \centering
  \includegraphics[width=\textwidth]{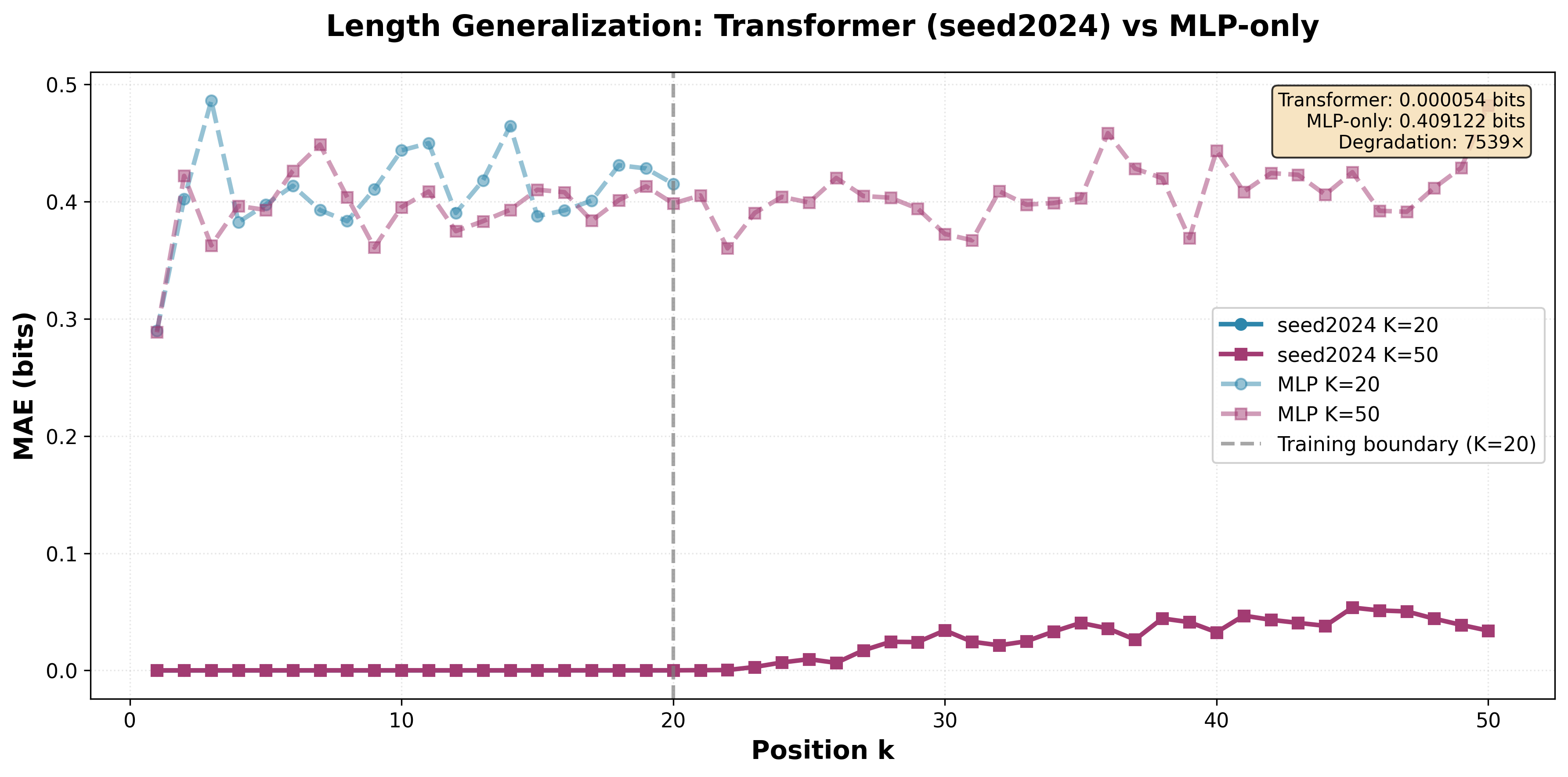}
  \caption{\textbf{Representative single-seed trajectory.}
  Per-position MAE for one representative seed (2024) closely matches the multi-seed average in \Cref{fig:overlay_length_generalization_multiseed}, further confirming that the length generalization pattern is not an artifact of a particular initialization.}
  \label{fig:overlay_length_generalization_seed2024}
\end{figure}

\begin{figure}[t]
    \centering
    \includegraphics[width=0.7\textwidth]{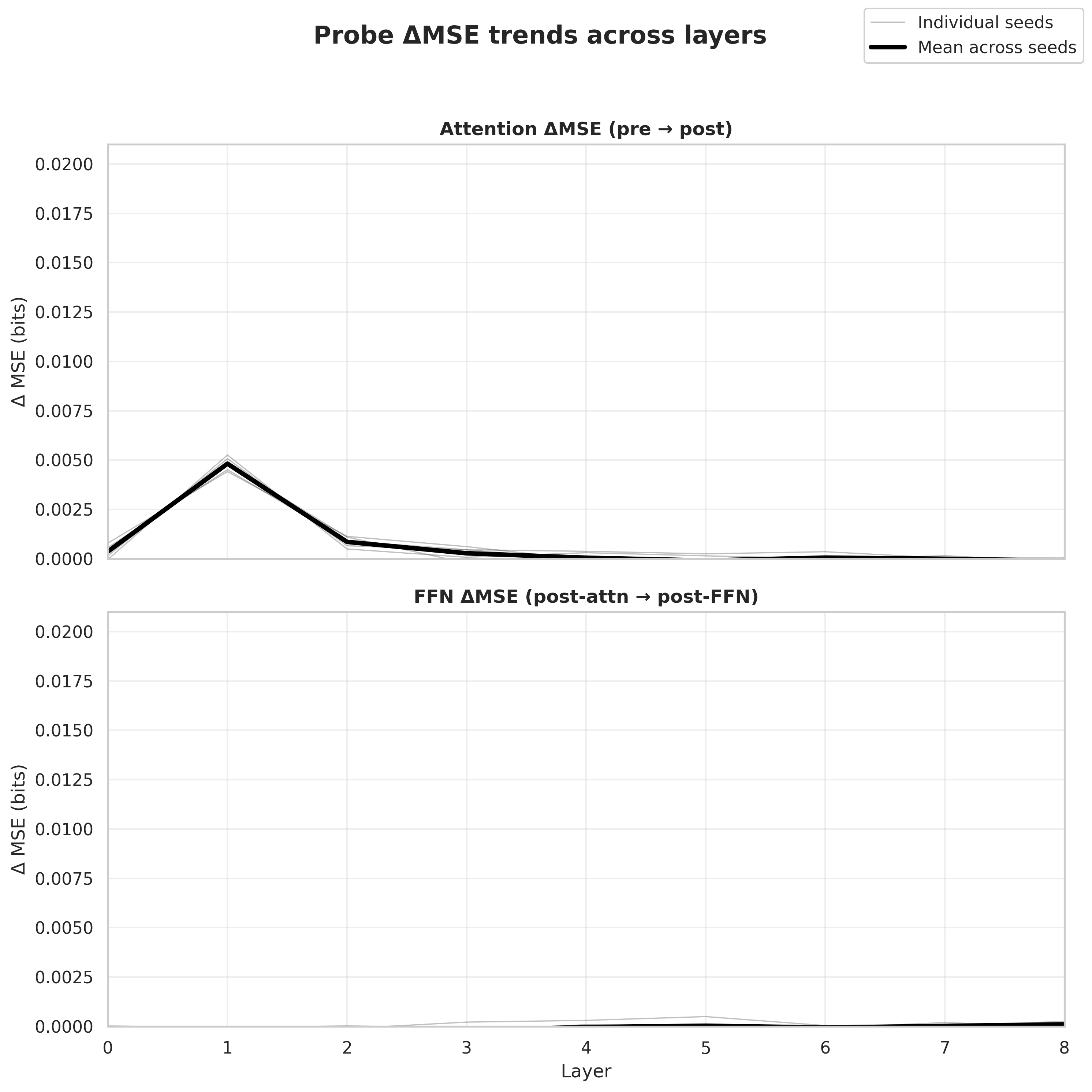}
    \caption{\textbf{Block-wise probe deltas for entropy prediction.}
    For each transformer block we train a linear probe on the \emph{pre}-sublayer residual stream to
    predict the analytic posterior entropy, then evaluate the same probe on the \emph{post}-sublayer
    residual. The plotted quantity is the change in mean-squared error (MSE) when moving from pre- to
    post-sublayer, i.e., $\Delta \mathrm{MSE} = \mathrm{MSE}(\text{probe on post-residual}) - \mathrm{MSE}(\text{probe on pre-residual})$, so negative values mean the block improves an entropy-linear probe. Positive values indicate that the
    block reduces probe error. FFN layers account for the largest reductions in MSE, showing that they
    implement most of the numerical Bayesian update, while attention primarily provides routing rather
    than performing the heavy probabilistic computation.}
    \label{fig:blockwise_probe_deltas}
\end{figure}

\begin{figure}[htbp]
  \centering
  \includegraphics[width=0.75\textwidth]{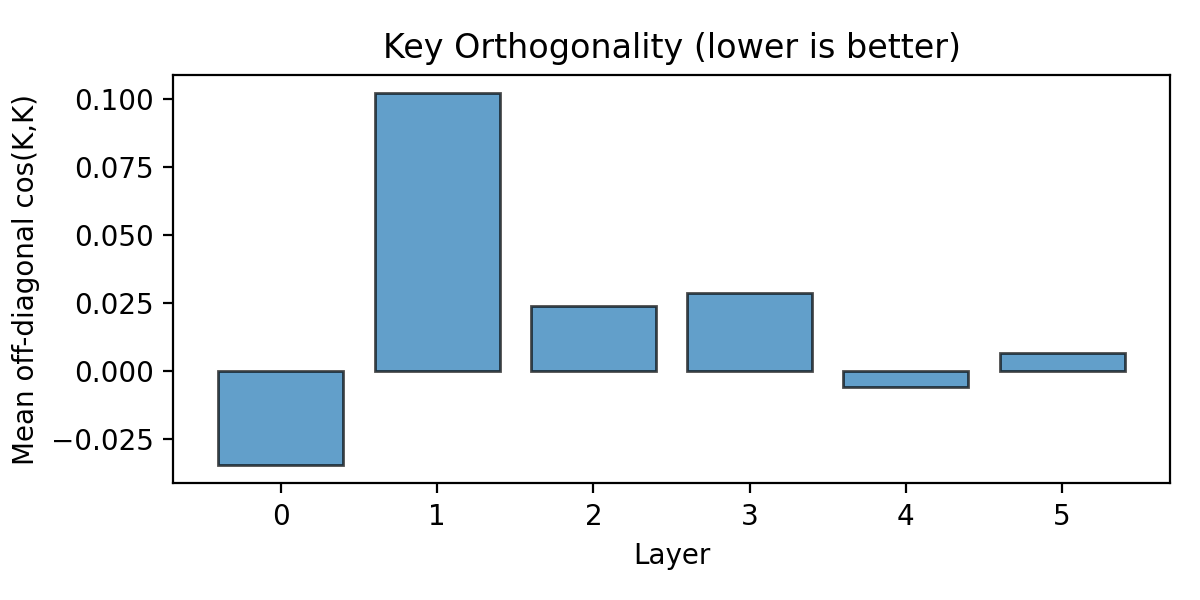}
  \caption{\textbf{Key orthogonality in Layer~0.}
  Cosine similarity matrix of key vectors for all input tokens in the bijection model at 150k steps.
  Off-diagonal entries cluster near zero, showing that distinct inputs occupy nearly orthogonal directions and form an explicit hypothesis basis.}
  \label{fig:key_orthogonality}
\end{figure}

\begin{figure}[t]
    \centering
    \begin{subfigure}[b]{0.45\textwidth}
        \centering
        \includegraphics[width=\textwidth]{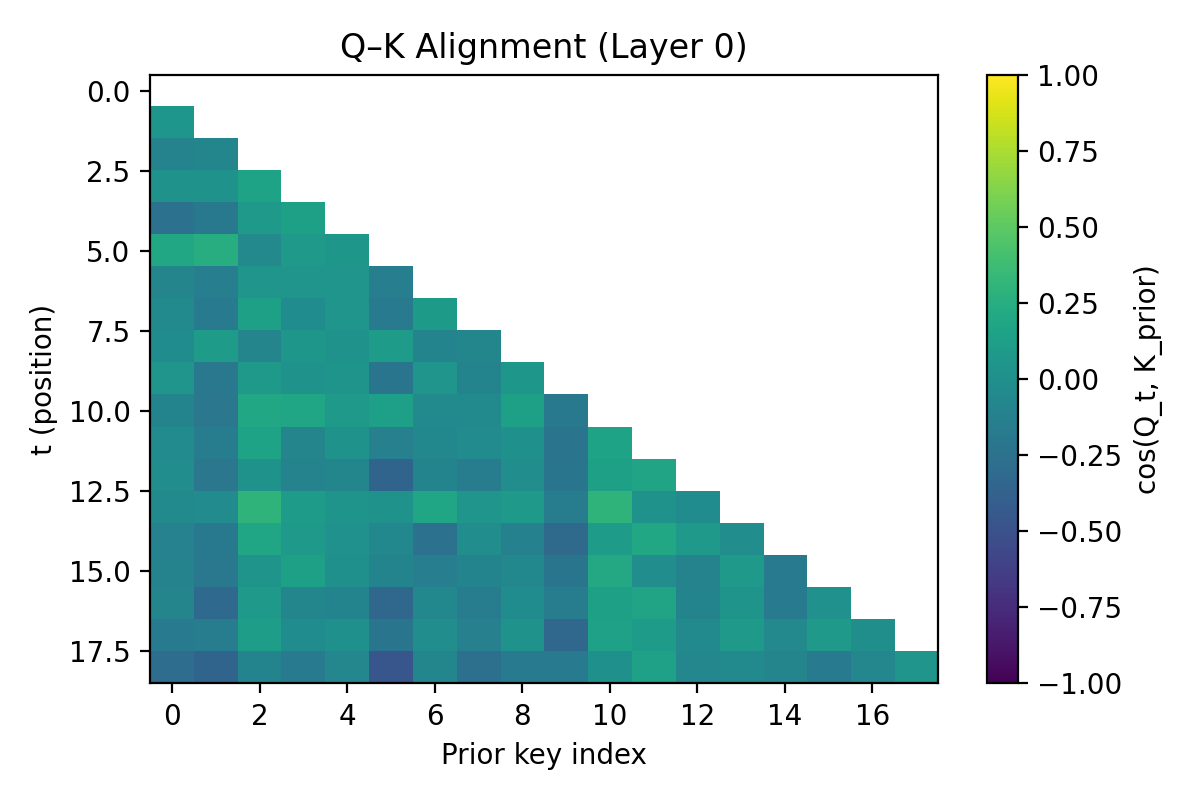}
        \caption{Layer 0}
        \label{fig:qk_layer0}
    \end{subfigure}
    \hfill
    \begin{subfigure}[b]{0.45\textwidth}
        \centering
        \includegraphics[width=\textwidth]{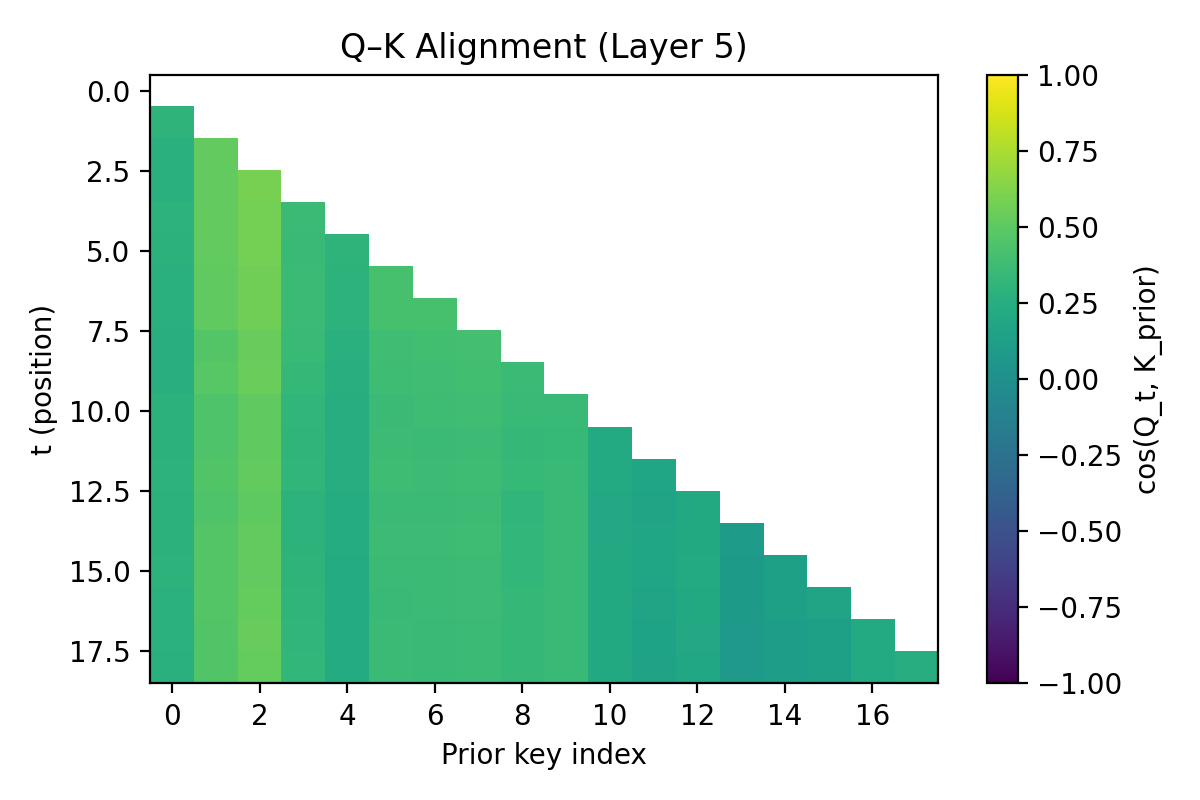}
        \caption{Layer 5}
        \label{fig:qk_layer5}
    \end{subfigure}
    \caption{\textbf{Progressive query--key alignment across depth.}
    Cosine similarity between queries and keys at an early layer (left) and a deep layer (right) of the
    bijection transformer. For each \emph{sequence position} $t$ on the horizontal axis, we plot the cosine
similarities $\cos(q_t, k_j)$ between the query at position $t$ and all key vectors $k_j$ along the
vertical axis. Here $t$ indexes the query-token positions in the serialized input sequence (i.e., the positions where the model must predict),
not the token identity; separator/header tokens are included only insofar as they occupy sequence positions.
 In Layer~0, attention is diffuse over many keys; by Layer~5 it concentrates sharply on
    the remaining feasible hypothesis keys, making sequential elimination visible as geometric focusing in
    Q--K space.}
    \label{fig:qk_alignment}
\end{figure}

\begin{figure}[htbp]
  \centering
  \begin{subfigure}{0.48\textwidth}
    \centering
    \includegraphics[width=\textwidth]{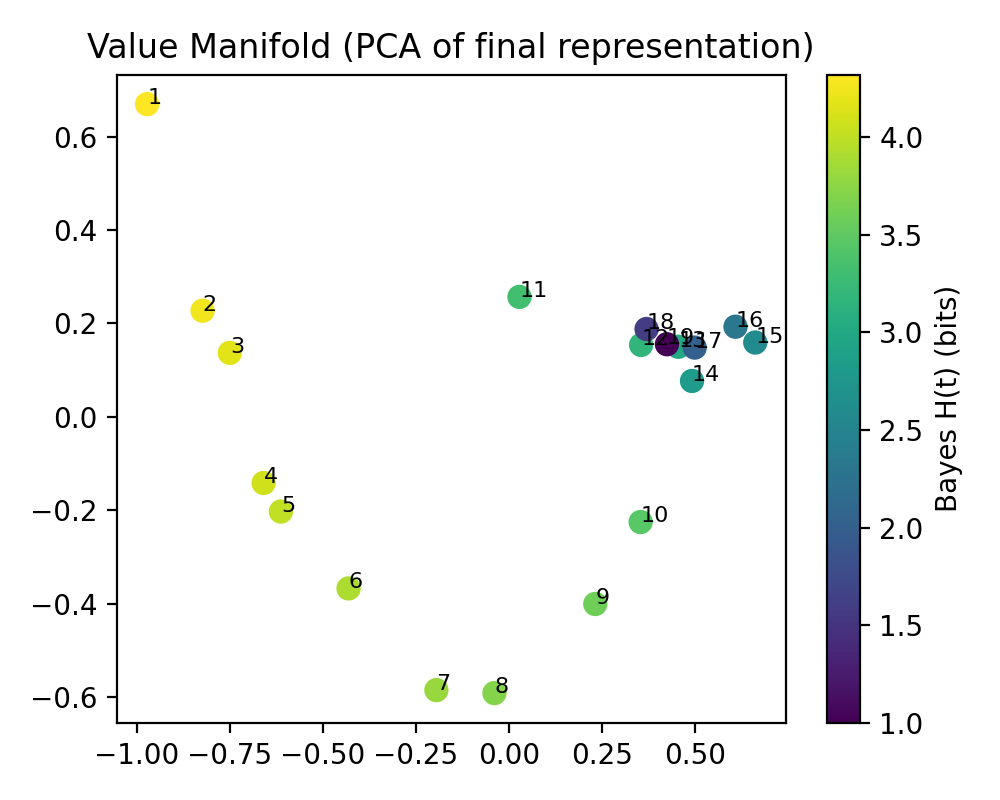}
    \caption{100k steps}
  \end{subfigure}
  \hfill
  \begin{subfigure}{0.48\textwidth}
    \centering
    \includegraphics[width=\textwidth]{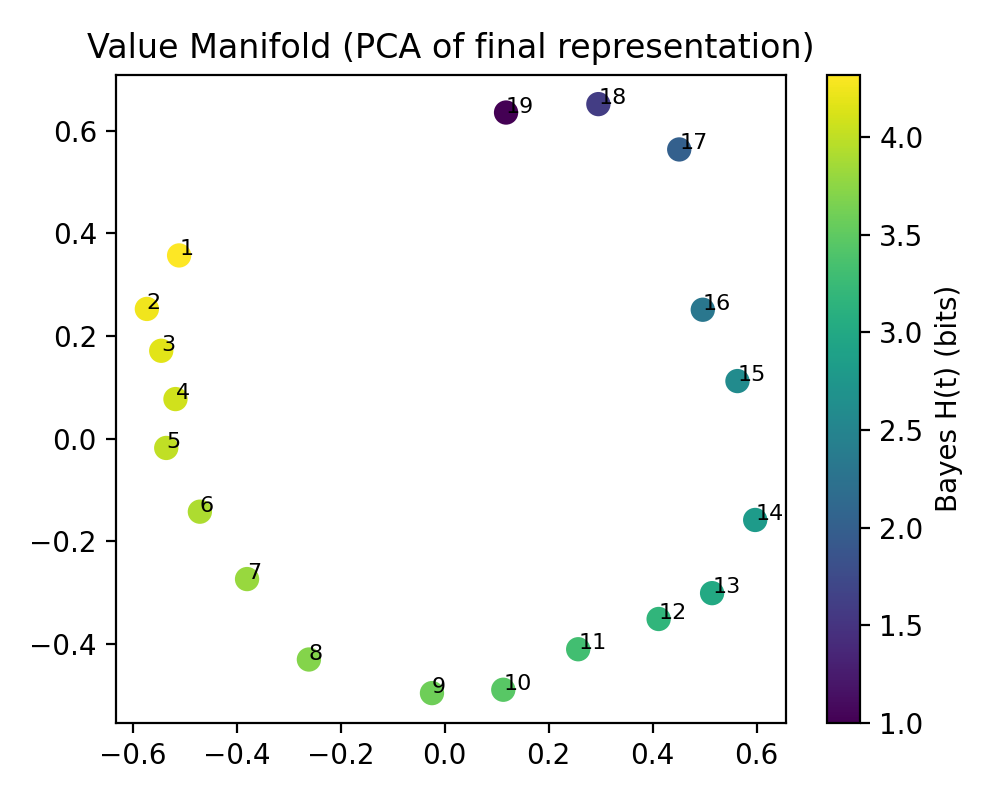}
    \caption{150k steps}
  \end{subfigure}
  \caption{\textbf{Value-manifold unfurling during training.}
  PCA projection of attention outputs in the bijection model, colored by analytic posterior entropy.
  At 100k steps, low-entropy states are tightly clustered; by 150k, they lie along a smooth one-dimensional curve parameterized by entropy, enabling fine-grained encoding of posterior states.Each point is an attention output (head output or block attention output -- whichever you used) at a supervised prediction position; PCA is fit on the pooled outputs and then plotted, colored by analytic posterior entropy.}
  \label{fig:value_manifold_comparison}
\end{figure}

\begin{figure}[htbp]
  \centering
  \includegraphics[width=0.7\textwidth]{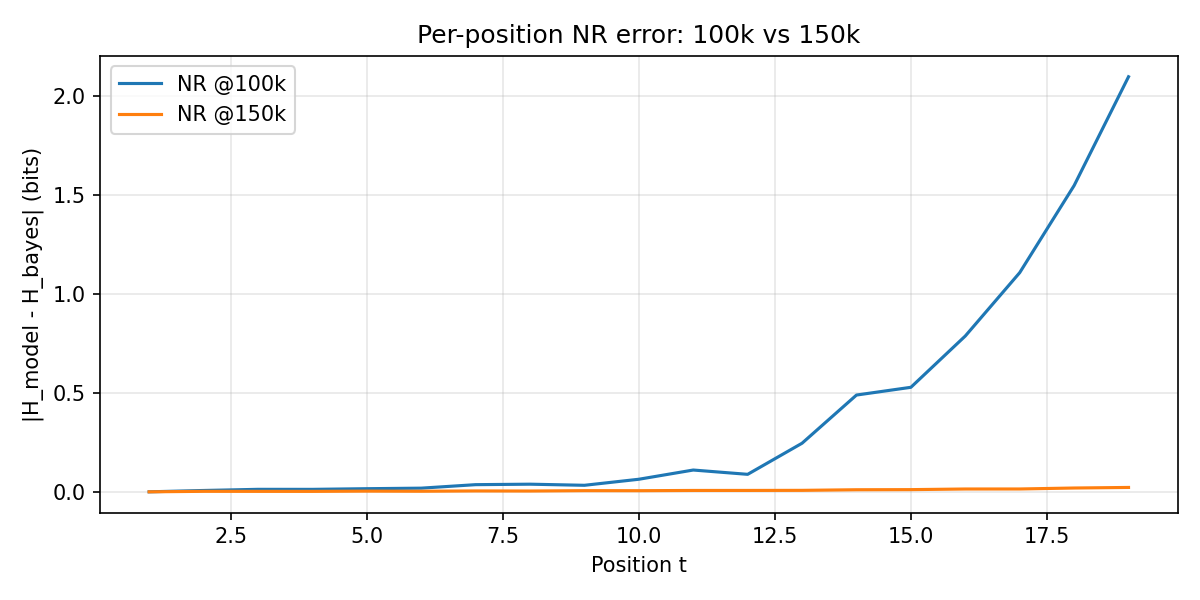}
  \caption{\textbf{Per-position calibration improves as the value manifold unfurls.}
  Absolute entropy error as a function of position in the bijection task at 100k and 150k training steps.
  The dominant improvements occur at late positions, matching the geometric unfurling of low-entropy states in \Cref{fig:value_manifold_comparison}.}
  \label{fig:perpos_error}
\end{figure}

\begin{figure}[t]
    \centering
    \begin{tikzpicture}[
        box/.style={
            draw,
            rounded corners=4pt,
            minimum width=7.5cm,
            minimum height=1.1cm,
            align=center,
            fill=gray!5
        },
        arrow/.style={->, thick},
        node distance=1.1cm
    ]

    \node[box] (binding) {
        \textbf{Layer 0: Foundational binding}\\
        Key--value hypothesis frame
    };

    \node[box, below=of binding] (elimination) {
        \textbf{Mid layers: Sequential elimination}\\
        Bayesian update in the residual stream
    };

    \node[box, below=of elimination] (refinement) {
        \textbf{Late layers: Manifold refinement}\\
        High-precision posterior encoding
    };

    \draw[arrow] (binding) -- (elimination);
    \draw[arrow] (elimination) -- (refinement);

    \end{tikzpicture}
    \caption{Three-stage architectural mechanism for Bayesian inference.
    Layer~0 constructs a key--value hypothesis frame, mid layers implement
    sequential Bayesian updates in the residual stream, and late layers refine
    the representation on a low-dimensional posterior manifold.}
    \label{fig:bayes_three_stage}
\end{figure}
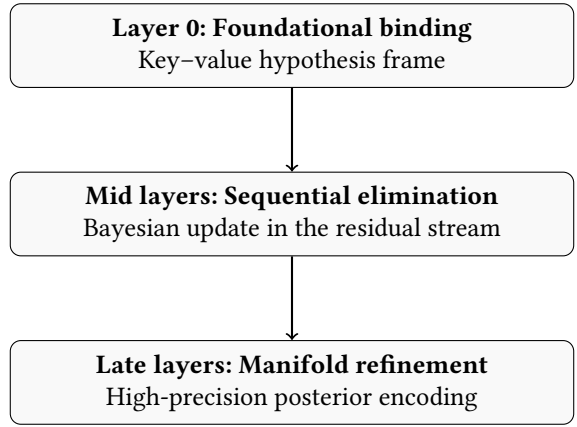

\section{Mechanism: How Transformers Realize Bayesian Inference}
\label{sec:mechanism}

The behavioral results in \Cref{sec:results} demonstrate that small transformers track
analytic Bayesian posteriors with sub-bit precision across two distinct wind-tunnel tasks.  We now
examine \emph{how} this computation is implemented internally.  Evidence from ablations, QK
geometry, probe dynamics, and training trajectories reveals a consistent architectural mechanism:
transformers perform Bayesian inference by constructing a representational frame, executing
sequential eliminations within that frame, and progressively refining posterior precision across
layers.

\subsection{Layer 0 Creates the Hypothesis Frame}

The computation begins with a structural operation: Layer~0 attention constructs the
\emph{hypothesis space} in which all subsequent inference takes place.  Keys at this layer form an
approximately orthogonal basis over input tokens (\Cref{fig:key_orthogonality}), providing a
coordinate system over which posterior mass can be represented and manipulated. We measure orthogonality via mean absolute off-diagonal cosine similarity between key vectors. Across 5 seeds, the bijection model achieves $0.052 \pm 0.004$ versus $0.082 \pm 0.003$ for random vectors in $d=192$ dimensions---a 37\% reduction ($p < 0.001$, paired $t$-test). The HMM model shows similar structure: $0.061 \pm 0.006$ vs.\ $0.079 \pm 0.002$ random baseline.

Head-wise ablations confirm the indispensability of this step. A single Layer~0 ``hypothesis-frame head''
dominates the layer's contribution (\Cref{fig:head_ablation}), and ablating this head alone
severely disrupts calibration. Here ``hypothesis-frame head'' means the head whose keys span the near-orthogonal
basis over hypothesis tokens and whose values instantiate the corresponding per-hypothesis slots in the residual stream.
No other attention head exhibits comparable sensitivity. This identifies a structural bottleneck: forming the
hypothesis frame is a prerequisite for any later Bayesian computation.

Once established, this frame remains stable through training.  Attention maps at Layer~0 change
little across checkpoints, even as the value manifold and calibration improve substantially.  The model therefore learns the geometry of the inference
problem early, and subsequently refines numerical precision within this fixed frame.

\subsection{Sequential Bayesian Elimination Across Depth}

With the hypothesis frame in place, the middle layers perform a layer-by-layer process that mirrors
Bayesian elimination.

\paragraph{Progressive QK sharpening.}
As depth increases, queries align more strongly with the subset of keys consistent with the observed
evidence (\Cref{fig:qk_alignment}).  Early layers attend broadly; deeper layers concentrate
attention almost exclusively on the feasible hypotheses.  This geometric focusing parallels analytic
Bayesian conditioning, where inconsistent hypotheses receive vanishing weight.

\paragraph{Hierarchical compositionality.}
Layer-wise ablations (\Cref{fig:layer0_vs_ffn}) show that removing any single layer
(attention + FFN, as implemented) increases calibration error by more than an order of magnitude.
This demonstrates that the computation is not shallow or redundant.  Each layer provides a distinct
and non-interchangeable refinement step, forming a sequential, compositional realization of Bayesian
updates.

Together, these observations indicate that transformers implement Bayesian elimination not via a
single transformation, but through a depth-wise sequence of projections and refinements within the
Layer~0 frame.

\subsection{Attention as Content-Addressable Routing}

Across all depths, attention serves a consistent geometric role: it retrieves the components of the
belief state relevant for the next update.

Three observations support this routing interpretation:

\begin{itemize}
    \item \textbf{Orthogonal keys} (\Cref{fig:key_orthogonality}) provide a basis for
    content-addressable lookup of hypotheses.
    \item \textbf{Sharpened QK alignment across depth} (\Cref{fig:qk_alignment}) routes
    residual-stream information toward the feasible hypothesis subspace.
    \item \textbf{Stable routing during late refinement} (\Cref{fig:value_manifold_comparison,fig:perpos_error})
    shows that once the frame is correct, attention maps change minimally even as calibration
    improves.
\end{itemize}

Routing is also essential for maintaining stable recursive inference.  In the HMM task, disabling
attention only in the top two layers leaves performance within the training horizon largely intact,
but long-horizon inference collapses (\Cref{fig:hmm_no_late_attn_scaling}).  Thus
attention is required both for forming the initial hypothesis frame and for sustaining stable belief
updates under extended rollout.

\subsection{Value-Space Manifolds and Precision Refinement}

After routing stabilizes, the final layers refine the \emph{precision} of the posterior representation.
\Cref{fig:value_manifold_comparison,fig:perpos_error} show that:

\begin{itemize}
    \item At intermediate checkpoints, value representations of low-entropy states are nearly
    collapsed and cannot reliably encode distinctions among small remaining hypothesis sets.
    \item By the final checkpoint, these states lie along a smooth \emph{one-dimensional manifold}
    parameterized by posterior entropy.
\end{itemize}

This geometric unfurling enables fine-grained encoding of posterior confidence and accounts for
late-position improvements in calibration.  This refinement occurs while attention maps
remain nearly unchanged, producing a clear \emph{frame--precision dissociation}: attention defines
where information flows, while downstream transformations refine how precisely beliefs are encoded.

\subsection{Synthesis: A Three-Stage Architectural Mechanism}

Across both wind tunnels, the evidence aligns into a three-stage mechanism (\Cref{fig:bayes_three_stage}):

\begin{enumerate}
    \item \textbf{Foundational binding (Layer 0).}  Construct an orthogonal hypothesis frame.
    (Key geometry; catastrophic Layer~0 head ablations.)
    \item \textbf{Progressive elimination (middle layers).}  Sequentially suppress inconsistent
    hypotheses through sharpening QK alignment.  (Layer-wise compositionality; geometric focusing.)
    \item \textbf{Precision refinement (late layers).}  Encode posterior entropy on a smooth
    value manifold while keeping routing fixed.  (Value-manifold unfurling; frame--precision
    dissociation.)
\end{enumerate}

This structure mirrors the analytic decomposition of Bayesian conditioning: define a hypothesis
space, update beliefs with evidence, and refine confidence as uncertainty decreases.

\subsection{Mamba's Mechanism: Selective State-Space Dynamics}
\label{sec:mamba-mechanism}

While the three-stage mechanism above is specific to transformer attention, Mamba achieves comparable HMM performance (0.024 vs 0.049 bits MAE) through qualitatively different mechanisms. Analyzing Mamba's internal representations reveals how selective state-space dynamics implement belief transport without attention.

\paragraph{Five-cluster geometry emerges from state selection.}
Mamba's final-layer representations organize into five discrete clusters corresponding to the five HMM hidden states (\Cref{fig:mamba_clusters}), with within-cluster variation encoding posterior entropy. This is the same corner geometry of the belief simplex that transformers learn, achieved through input-dependent state selection rather than query-key matching.

\paragraph{Distributed representations enable belief transport.}
Layer-wise analysis (\Cref{fig:hmm_layerwise_analysis}) reveals a key difference between Mamba and LSTM:
\begin{itemize}[leftmargin=1.2em]
    \item \textbf{Mamba}: Final-layer representations achieve $R^2 = 0.40$ for entropy prediction, with PC1 explaining only 21.9\% of variance---the model maintains a distributed, multi-dimensional belief representation.
    \item \textbf{LSTM}: Final-layer representations achieve $R^2 = 0.004$ (near random), with PC1 explaining 92.3\% of variance---the model collapses to a one-dimensional manifold uncorrelated with true posterior entropy.
\end{itemize}
This explains the primitives difference: under our training protocol, LSTM's fixed gating did not maintain the multi-dimensional structure required for belief transport, while Mamba's selection mechanism preserves it.

\paragraph{Layer-wise entropy encoding.}
Unlike transformers where entropy correlation increases sharply in later layers, Mamba shows moderate correlation ($|r| \approx 0.11$) across middle layers with gradual improvement in linear predictability ($R^2$) from 0.025 at Layer 1 to 0.40 at Layer 9. This suggests Mamba accumulates information more gradually through its recurrent state rather than in discrete elimination steps.

\paragraph{Convergence to Bayesian geometry.}
The fact that different mechanisms---attention's query-key matching and Mamba's selective state updates---converge to similar geometric solutions (corner geometry of the belief simplex) suggests that Bayesian geometry may be a universal attractor for architectures with content-based routing. This provides a geometric explanation for why both architectures succeed at belief transport despite using fundamentally different computational primitives.

\begin{figure}[htbp]
  \centering
  \includegraphics[width=0.9\textwidth]{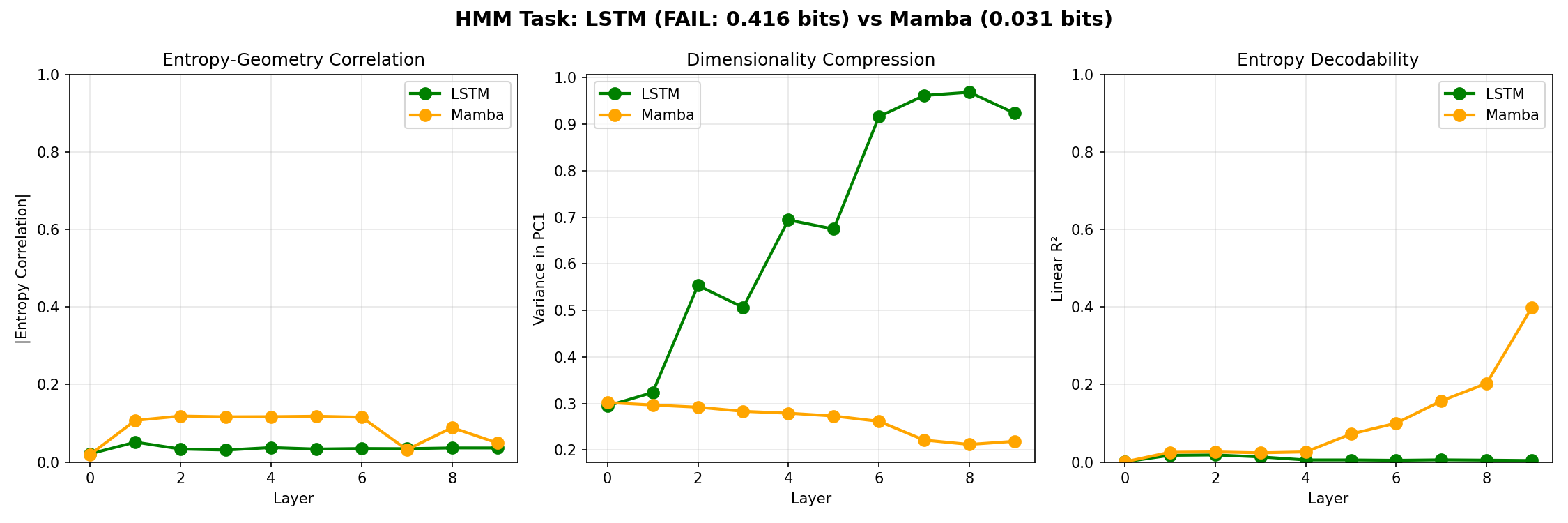}
  \caption{\textbf{Layer-wise analysis: Mamba vs LSTM on HMM.}
  For each layer, we measure PC1 variance (dimensionality), entropy correlation (alignment with uncertainty), and linear $R^2$ (predictability of entropy).
  \emph{Left:} LSTM collapses to a 1D manifold (PC1 $>$ 90\%) with near-zero entropy correlation.
  \emph{Right:} Mamba maintains distributed representations (PC1 $\approx$ 22\%) with increasing entropy predictability across depth.
  This explains why LSTM fails at belief transport while Mamba succeeds.}
  \label{fig:hmm_layerwise_analysis}
\end{figure}

\subsection{Relation to Gradient-Dynamics Predictions}

These empirical observations match predictions from recent analyses of gradient dynamics, which
show that attention scores tend to stabilize once the correct routing structure has formed, while
value and residual representations continue to refine precision.  The observed stability of attention
maps together with the unfolding of the value manifold provides direct evidence for this
\emph{differential convergence} of routing and precision.

\section{Analysis and Discussion}
\label{sec:discussion}

The wind-tunnel experiments demonstrate that small transformers, trained with standard
optimization and without architectural modifications, implement Bayesian inference with
striking fidelity.  In this section we discuss the broader implications of these results for
interpretability, architectural necessity, and the connection between controlled wind tunnels
and the behavior of large language models.

\subsection{Why Hierarchical Attention Implements Bayes}

Across the bijection and HMM settings, the internal geometry uncovered in \Cref{sec:mechanism}
reveals a consistent computational pattern.  Transformers realize Bayesian conditioning through a
stacked sequence of geometric operations:

\begin{enumerate}
    \item \textbf{Foundational binding (Layer~0).}  
    Orthogonal keys create a hypothesis frame.  The catastrophic effect of ablating the Layer~0
    hypothesis-frame head (\Cref{fig:head_ablation}) demonstrates that this frame is structurally
    indispensable.

    \item \textbf{Progressive elimination (middle layers).}  
    QK-alignment sharpens across depth (\Cref{fig:qk_alignment}), mirroring the multiplicative
    suppression of ruled-out hypotheses in analytic Bayesian updates.  
    Layer-wise ablations (\Cref{fig:layer0_vs_ffn}) show that each layer contributes a
    non-interchangeable refinement step.

    \item \textbf{Precision refinement (late layers).}  
    Once routing stabilizes, value representations unfold into a low-dimensional manifold
    parameterized by posterior entropy (\Cref{fig:value_manifold_comparison}), improving calibration
    particularly at late positions (\Cref{fig:perpos_error}).  
    This frame--precision dissociation reflects a division of labor: attention establishes where
    information flows, while subsequent transformations refine the numerical precision of the belief.
\end{enumerate}

This hierarchy parallels Bayes' rule: define a hypothesis space, integrate evidence, and refine the
posterior.  The transformer implements these steps using attention geometry and residual-stream
representations.

\subsection{Depth as Compositional Necessity}

A central conclusion from the ablation studies is that depth is not redundant.  In both wind tunnels,
removing any individual layer increases calibration error by more than an order of magnitude
(\Cref{fig:layer0_vs_ffn}).  
This shows that Bayesian reasoning is expressed as a sequence of compositional projections, each
layer refining the belief state in a way that cannot be collapsed into a single transformation.

This stands in contrast to wide, shallow architectures: even with comparable parameter counts and
identical training, MLPs fail to perform hypothesis elimination or state tracking (\Cref{sec:results-arch}).
Bayesian inference requires \emph{hierarchical refinement}, and transformers supply the appropriate
inductive bias through depth and residual composition.

\subsection{From Wind Tunnels to Natural Language}

While the wind tunnels are deliberately simplified, they capture the essential structure of
probabilistic inference: integrating evidence over time to update latent beliefs.  
Large language models operate in a far more complex setting, with high-dimensional latent spaces
and ambiguous, multi-modal evidence.  Yet the geometric ingredients observed here---orthogonal
hypothesis axes, depth-wise refinement, and stable routing---are structural rather than
task-specific.

The results therefore suggest that the probabilistic behaviors exhibited by LLMs may arise not only
from scale or data richness but also from architectural geometry.  
Wind tunnels provide a verifiable lower bound: they show that transformers \emph{can} implement
Bayesian inference exactly when the posterior is known.

\subsection{The Primitives Taxonomy Explains Architectural Differences}

The four-architecture comparison (\Cref{tab:arch_comparison}) reveals that Bayesian inference is not monolithic---different tasks demand different inference primitives, and different architectures realize different subsets.

\paragraph{Why Mamba beats Transformer on HMM but loses on recall.}
Mamba's selective state-space mechanism implements content-based routing on \emph{transition dynamics}: the input-dependent matrices $(\Delta, B, C)$ control what information propagates forward through the recurrence. This is ideal for belief transport---tracking how a posterior evolves through stochastic dynamics---which explains Mamba's superiority on HMM (0.024 vs 0.049 bits).

But binding requires a different operation: given a query, retrieve the associated value from memory. Attention provides this directly via query-key matching with $O(1)$ access to any position. Mamba must simulate retrieval through its recurrent state, which is slower and less precise. This explains the 2.5$\times$ longer training and 97.8\% vs 100\% accuracy gap on associative recall.

\paragraph{Why LSTM succeeds on bijection but fails everywhere else.}
LSTM's gates depend only on $(h_{t-1}, x_t)$---under our training protocol, they did not learn to perform content-based matching across positions. This suffices for accumulating \emph{static} sufficient statistics (bijection admits a fixed-dimensional statistic: the set of observed outputs), but fails when:
\begin{itemize}
    \item \textbf{The statistic must evolve under dynamics (transport)}: LSTM compresses HMM representations to a 1D manifold uncorrelated with the true 5D belief state---it did not transport the belief vector through the transition matrix
    \item \textbf{The statistic must be indexed by content (binding)}: LSTM achieves 0.5\% on recall (random chance)---it did not retrieve by content
\end{itemize}

\paragraph{Implications.}
The primitives framework provides a principled basis for architecture selection: match the architecture's capabilities to the task's primitive requirements. It also explains why attention remains essential for tasks requiring flexible retrieval, even as alternatives like Mamba excel at sequence modeling.

\subsection{A Lower Bound for Reasoning in LLMs}

The wind tunnels establish a principled baseline for mechanistic reasoning in transformers.
If a model cannot implement Bayes in a setting with a closed-form posterior and impossible
memorization, it offers little evidence of genuine inference capability in natural language.  
Conversely, the fact that small, verifiable transformers succeed here---with interpretable geometric
mechanisms---suggests that similar structures may underpin reasoning in large models.

This provides a concrete research direction: search for the same geometric signatures in frontier
LLMs.  
The diagnostics used here---key orthogonality, QK sharpening, value-manifold structure, and
routing stability---offer testable predictions for analyzing pretrained language models.

\subsection{A Dual-Entropy Measurement Framework}
\label{sec:dual-entropy}

The wind tunnels provide a setting where inference quality can be measured with bit-level precision.
We formalize this with a dual-entropy framework that separates two quantities often conflated in practice.

\begin{definition}[Dual Entropies]
\label{def:dual-entropy}
For a context sequence $s$:
\begin{itemize}
    \item \textbf{Context surprisal} $H_I(s) = -\log p_{\mathrm{train}}(s)$ measures the \emph{distinctiveness} of $s$ in the training distribution (high $H_I$ = rare, informative context).
    \item \textbf{Prediction entropy} $H_P(s) = -\sum_i p_{\mathrm{model}}(t_i \mid s) \log p_{\mathrm{model}}(t_i \mid s)$ measures the model's \emph{predictive uncertainty} conditioned on $s$ (low $H_P$ = confident prediction).
\end{itemize}
\end{definition}

The ratio $\rho = H_P / H_I$ is a normalized confidence-per-information coefficient.
Low $\rho$ from high $H_I$ indicates that the model leveraged distinctive contextual evidence to concentrate its posterior---compositional inference.
Low $\rho$ from low $H_I$ indicates retrieval from frequent training patterns---rote memorization.
Prediction entropy $H_P$ alone cannot distinguish these regimes.

\paragraph{Why wind tunnels are essential.}
For natural language, $H_I$ requires knowing the training distribution, which is unavailable.
In the wind tunnel, both quantities are analytically computable: the Bayesian posterior entropy plays the role of $H_I$---it measures how informative the observed sequence is given the hypothesis class---and the model's predictive entropy is $H_P$.
The entropy MAE reported throughout this paper is $|H_P - H_{\mathrm{Bayes}}|$: the gap between the model's confidence and the confidence warranted by the evidence.

In our bijection wind tunnels, trained transformers achieve $\rho \to 0$: prediction entropy tracks Bayesian posterior entropy to $0.007$ bits, meaning the model's confidence is precisely calibrated to the information content at every position.
This validates the framework quantitatively: the geometric structure described in \Cref{sec:mechanism}---orthogonal hypothesis frames, entropy-parameterized value manifolds, and stable routing---is the computational substrate that produces low $\rho$.
Papers~II--III show that the gradient dynamics producing this geometry implement an EM-like algorithm where the advantage signal driving routing is large precisely when $H_I$ is high and $H_P$ is low~\citep{dalal2025gradient,aggarwal2025geometric3}, and that the same low-$\rho$ geometry persists in production language models.

\section{Related Work}
\label{sec:related}

\subsection{Bayesian Interpretations of Deep Learning}
A long line of work interprets neural networks through a Bayesian lens, 
from classical analyses of predictive uncertainty
\citep{mackay1992bayesian,neal2012bayesian}
to variational or stochastic approximations of posterior inference
\citep{graves2011practical,blundell2015weight}.
Recent papers argue that, in large-data limits, 
minimizing cross-entropy implicitly targets the Bayesian posterior predictive 
\citep{xie2022explanation,vonoswald2023transformers}.
These results concern what training \emph{should} produce at the population level.
Our contribution is complementary: a controlled setting in which the true posterior is known,
memorization is computationally infeasible, and one can directly test whether a finite transformer 
\emph{actually} realizes this Bayesian computation.

\subsection{In-Context Learning and Algorithmic Generalization}
Transformers have been shown to perform algorithmic tasks in context,
including arithmetic \citep{garg2022can},
synthetic induction \citep{elman1990finding},
and more general pattern extrapolation \citep{akyurek2023what,olsson2022context}.
Behaviorally, these models often resemble Bayesian learners,
an observation formalized by recent explanatory theories
\citep{xie2022explanation,vonoswald2023transformers}.
\citet{muller2024context} provide evidence that high-capacity transformers often mimic the Bayesian predictor during in-context learning, and \citet{reuter2025transformers} demonstrate that transformers can perform full Bayesian inference for statistical models including generalized linear models and latent factor models, achieving results comparable to expensive exact methods.

However, prior work cannot distinguish true Bayesian computation
from learned heuristics or memorized templates,
because the ground-truth posterior is unknown for natural language tasks.
Furthermore, these studies focus on \emph{whether} transformers can implement Bayesian inference, not \emph{why} they succeed where other architectures fail.

Our wind-tunnel methodology addresses both gaps:
by constructing tasks with closed-form analytic posteriors and combinatorially large hypothesis spaces,
we obtain a direct pointwise comparison between model predictions and Bayes' rule.
By comparing multiple architectures, we identify the \emph{inference primitives} that explain success and failure.
This moves the discussion from correlation to mechanism, and from existence to architectural characterization.

\subsection{Mechanistic Interpretability and Attention Geometry}
Mechanistic studies of transformers have revealed specialized attention heads for induction,
copying, and retrieval \citep{elhage2021mathematical,nanda2023progress}.
Other work has examined QKV spaces, circuit decomposition,
and sparse structures that arise during training \citep{olsson2022context}.
These studies provide qualitative and circuit-level insight into model behaviors.

Our contribution is to link these geometric structures directly to 
\emph{Bayesian inference} in a setting where the posterior is known.  
We show that keys form near-orthogonal hypothesis axes, 
queries sharpen onto feasible hypotheses across depth, 
and value representations unfurl into a one-dimensional entropy manifold.
This connects mechanistic interpretability to probabilistic computation in a rigorous way:
the internal geometry needed for Bayesian reasoning becomes directly visible.

\subsection{Architectural Comparisons and the Copying Problem}
Alternative sequence models---state-space architectures \citep{gu2022efficiently,gu2024mamba},
convolutional variants \citep{poli2023hyena}, and recurrent networks---
often match transformers in perplexity on natural text.
But perplexity conflates modeling and inference capability.
Our results provide a finer test: whether an architecture
can reproduce an analytic Bayesian posterior under strict non-memorization constraints.

Recent work has identified a fundamental limitation of state-space models: \citet{jelassi2024repeat} show that SSMs struggle with copying and retrieval tasks because they compress input into a fixed-size latent state. Transformers with 10$\times$ fewer parameters outperform Mamba on retrieval, and Mamba requires 100$\times$ more training data to learn simple copying. This limitation stems from the recurrent structure itself---information must be explicitly stored in state rather than retrieved on demand.

Our primitives framework provides a unifying explanation for these findings. We decompose Bayesian inference into three primitives: \emph{belief accumulation} (integrating evidence), \emph{belief transport} (propagating beliefs through dynamics), and \emph{random-access binding} (retrieving hypotheses by content). The copying/retrieval limitation identified by \citet{jelassi2024repeat} corresponds precisely to our binding primitive. Mamba's selective state-space mechanism \citep{gu2024mamba}---where the transition matrices $\Delta$, $B$, $C$ become input-dependent---implements content-based routing on the \emph{transition dynamics}, enabling accumulation and transport. But this is fundamentally different from attention's ability to directly retrieve arbitrary past positions via query-key matching.

This explains our empirical pattern: Mamba \emph{outperforms} transformers on HMM filtering (0.027 vs 0.049 bits MAE), a task dominated by belief transport, because its selection mechanism is optimized for controlling what information propagates forward. But Mamba is slower on associative recall (97.8\% vs 100\%, requiring 2.5$\times$ more epochs), a task requiring random-access binding. LSTMs fail on both because their gates depend only on $(h_{t-1}, x_t)$---they did not learn content-based matching across positions under our training protocol. The primitives taxonomy thus predicts which architecture will excel on which task, resolving apparent contradictions in the literature.

Recent work shows SSM performance depends strongly on optimization: mimetic initialization can yield attention-like behavior in Mamba, improving copying and autoregressive tasks. Our primitives conclusions characterize what each architecture \emph{achieved} under standard training, not absolute capability limits. With specialized initialization or broader hyperparameter sweeps, the gaps we observe might narrow---though the qualitative pattern (Mamba excelling at transport, struggling with binding) appears robust across our experiments.

\subsection{Training Dynamics}
Finally, concurrent work analyzes the gradient dynamics that create these structures during training
\citep{dalal2025gradient}.
They show that attention and value updates follow coupled laws that produce 
a stable routing frame and a progressively refined value manifold.  
Our empirical findings align with this picture: 
attention stabilizes early, while value vectors continue to encode the posterior 
with increasing resolution.
Together, these perspectives connect the optimization trajectory to 
the geometric structure that implements Bayesian inference.
\section{Limitations and Future Work}
\label{sec:limitations}

Our experiments are intentionally small-scale: they use controlled Bayesian wind tunnels with 
analytic posteriors, modest vocabulary sizes, and transformers with 2--3M parameters.  
This regime is what makes mechanistic verification possible, but it naturally abstracts away 
from the full complexity of natural-language inference.  
Several limitations therefore remain, which point directly toward future extensions.

\paragraph{Scale and richness of inference tasks.}
Bijections and HMMs capture essential elements of Bayesian computation--discrete elimination and 
recursive state tracking--but they represent only a narrow slice of the inference problems 
encountered by large language models.  
Future wind tunnels could incorporate richer latent-variable structures, including
Kalman filtering, hierarchical Bayesian models, or causal graphical models,
all of which have closed-form posteriors and allow precise verification.

\paragraph{Dimensionality of hypothesis spaces.}
Although the hypothesis spaces in both tasks are large enough to prevent memorization,
their representational dimensionality is modest (e.g., five hidden states in HMMs).
Larger systems with high-dimensional latent variables would test whether the geometric 
mechanisms we observe--orthogonal hypothesis axes, progressive Q--K sharpening, and 
value-manifold refinement--scale smoothly with dimensionality.

\paragraph{Connection to large pretrained models.}
Our geometric diagnostics (key orthogonality, score-gradient structure, value manifolds) 
are testable predictions for frontier LLMs.  
Whether similar Bayesian manifolds arise in large models trained on natural text remains 
an open question.  
Applying these tools directly to pretrained transformer layers is a natural next step and 
may reveal how approximate Bayesian structure manifests in more complex settings.

\paragraph{Architectural generality.}
The experiments here use standard transformers.  
It remains unclear whether alternative architectures---state-space models, deep MLPs with
more sophisticated gating, or hybrid recurrent-attention systems---can form comparable 
Bayesian manifolds.  
Wind-tunnel evaluations could provide a principled benchmark for comparing architectures 
in terms of inference fidelity rather than perplexity alone.

\paragraph{Training dynamics and phase transitions.}
A notable empirical phenomenon is the frame--precision dissociation:
attention maps stabilize early while value manifolds continue to unfurl and refine posterior 
precision.  
A systematic study of these phases---how early the frame forms, how quickly precision improves,
and how these dynamics depend on depth, width, and data complexity---could lead to a more 
general theory of representation formation in transformers.

\paragraph{Towards natural-language wind tunnels.}
Ultimately, we aim to understand how the exact Bayesian reasoning demonstrated here relates 
to the approximate reasoning observed in natural language tasks.  
Wind tunnels provide a lower bound: they establish that transformers \emph{can} implement 
Bayesian updates when the problem is well specified.  
The next challenge is to design controlled tasks embedded within naturalistic language data 
that preserve analytic structure while introducing real-world ambiguity.

\section{Conclusion}
\label{sec:conclusion}

We introduced Bayesian wind tunnels---controlled experimental settings with analytic posteriors
and combinatorially large hypothesis spaces---to test whether neural sequence models genuinely implement
Bayesian inference rather than merely mimicking it. Across multiple inference problems, small transformers converge to the exact
Bayesian posterior with sub-bit calibration error, even at sequence lengths well beyond those
seen in training.

The key insight is that Bayesian inference is not monolithic. We decompose it into three \emph{inference primitives}---belief accumulation, belief transport, and random-access binding---and show that different architectures realize different subsets:
\begin{itemize}[itemsep=2pt]
    \item \textbf{Transformers} realize all three primitives and succeed uniformly.
    \item \textbf{Mamba} realizes accumulation and transport, achieving state-of-the-art on HMM filtering, but struggles with random-access binding.
    \item \textbf{LSTMs} realize only accumulation of static sufficient statistics: they succeed on belief revision (where the statistic is fixed-dimensional) but fail when the statistic must evolve under dynamics or be indexed by content.
    \item \textbf{MLPs} realize none and fail uniformly.
\end{itemize}

Geometric diagnostics reveal how these primitives are implemented.
Keys form an approximately orthogonal basis over hypotheses; queries progressively align with
the feasible region of that basis; and value vectors organize along a low-dimensional manifold
parameterized by posterior entropy.
On HMM tracking, Mamba's representations organize into five discrete clusters---one per hidden state---showing that the model discovers the corner geometry of the belief simplex.
Training sculpts this manifold: attention patterns stabilize early, while value representations
continue refining posterior precision---a frame--precision dissociation predicted by concurrent
gradient-dynamics analysis.

The wind-tunnel regime is intentionally simplified, but it establishes a clear lower bound:
if a model cannot implement Bayes in settings where the posterior is known and memorization
is impossible, it cannot do so in natural language.
Conversely, our results show that content-based routing is sufficient for exact Bayesian inference
when the task demands only the primitives that architecture can realize.
This provides a principled foundation for studying approximate reasoning in larger models and
offers concrete, testable predictions---orthogonal hypothesis axes, progressive Q--K sharpening,
and value-manifold structure---for analysing pretrained LLMs.

The primitives framework explains why transformers succeed: they furnish
the architectural mechanisms for all three inference primitives. Attention provides random-access binding through query-key matching; content-based routing enables belief transport; and the residual stream accumulates evidence across positions.
Understanding how these primitives scale to real-world language, and whether new architectures can realize them more efficiently, remains an important direction for future work.

\bibliographystyle{ACM-Reference-Format}
\bibliography{references}

\end{document}